
\documentclass{article}

\usepackage{microtype}
\usepackage{graphicx}
\usepackage{subcaption}
\usepackage{booktabs} 

\usepackage{hyperref}




\usepackage[accepted]{icml2026}

\usepackage{amsmath}
\usepackage{amssymb}
\usepackage{mathtools}
\usepackage{amsthm}
\usepackage{amsfonts}
\usepackage{longtable}
\usepackage{graphicx} 
\usepackage[english]{babel}
\usepackage{booktabs,tabularx,makecell,xcolor,colortbl}
\newcolumntype{Y}{>{\centering\arraybackslash}X}
\definecolor{asryellow}{HTML}{FFFC9E}
\newcommand{\ms}[2]{\makecell[c]{#1\\(#2)}}
\usepackage{pdflscape}
\usepackage{soul}
\usepackage{booktabs}
\usepackage{siunitx}
\usepackage{makecell}
\newcommand{\cmark}{\textcolor{green!60!black}{\ding{51}}}
\newcommand{\xmark}{\textcolor{red!60!black}{\ding{55}}}
\newcommand{\good}{\cellcolor{green!15}\makebox[1.0em][c]{\cmark}}
\newcommand{\bad}{\cellcolor{red!15}\makebox[1.0em][c]{\xmark}}
\usepackage{pifont}
\usepackage{subcaption}
\usepackage{algorithm}
\usepackage{algpseudocode}

\usepackage[capitalize,noabbrev]{cleveref}

\theoremstyle{plain}
\newtheorem{theorem}{Theorem}[section]

\newtheorem{model}[theorem]{Model}
\newtheorem{corollary}[theorem]{Corollary}
\theoremstyle{definition}
\newtheorem{definition}[theorem]{Definition}

\theoremstyle{remark}
\newtheorem{remark}[theorem]{Remark}

\usepackage[textsize=tiny]{todonotes}
\usepackage{amsfonts}
\usepackage{longtable}
\usepackage{graphicx} 
\usepackage[english]{babel}
\usepackage{booktabs,tabularx,makecell,xcolor,colortbl,subcaption}
\newcolumntype{Y}{>{\centering\arraybackslash}X}
\usepackage{pdflscape}
\usepackage{soul}
\usepackage{booktabs}
\usepackage{siunitx}
\usepackage{makecell}
\definecolor{tableShade}{gray}{0.92}
\usepackage{hyperref}
\usepackage{soul}
\usepackage{listings}
\usepackage{multirow}
\usepackage{natbib}
\usepackage{float} 
\usepackage{graphicx}
\icmltitlerunning{Theory of Minimal Weight Perturbations in Deep Networks and its Applications for Low-Rank Activated Backdoor Attacks.}

\begin{document}

\twocolumn[
  \icmltitle{Theory of Minimal Weight Perturbations in Deep Networks and its Applications for Low-Rank Activated Backdoor Attacks}



  \icmlsetsymbol{equal}{*}

  \begin{icmlauthorlist}
    \icmlauthor{Bethan Evans}{yyy}
    \icmlauthor{Jared Tanner}{yyy}
  \end{icmlauthorlist}

  \icmlaffiliation{yyy}{Department of Mathematics, University of Oxford, Oxford, UK}

  \icmlcorrespondingauthor{Bethan Evans}{bethan.evans@maths.ox.ac.uk}
    \icmlcorrespondingauthor{Jared Tanner}{tanner@maths.ox.ac.uk}

  \icmlkeywords{Low-Rank Approximation, Pruning, Adversarial attack, Adversarial robustness, Backdoor attack}

  \vskip 0.3in
]



\printAffiliationsAndNotice{}  

\begin{abstract}

The minimal norm weight perturbations of DNNs required to achieve a specified change in output are derived and the factors determining its size are discussed. These single-layer exact formulae are contrasted with more generic multi-layer Lipschitz constant based robustness guarantees; both are observed to be of the same order which indicates similar efficacy in their guarantees. These results are applied to precision-modification-activated backdoor attacks, establishing provable compression thresholds below which such attacks cannot succeed, and show empirically that low-rank compression can reliably activate latent backdoors while preserving full-precision accuracy. These expressions reveal how back-propagated margins govern layer-wise sensitivity and provide certifiable guarantees on the smallest parameter updates consistent with a desired output shift.
\end{abstract}

\section{Introduction}

Modern deep neural networks (DNNs) often require substantial computational and memory resources \cite{strubell_energy_2019}, motivating widespread adoption of post-training \emph{network compression techniques} \cite{fu_cpt_2025} applied to pre-trained weights~$\theta$, such as quantization \cite{fiesler_weight_1990,zhang_lq-nets_2018}, pruning \cite{han_learning_2015}, and low-rank projection \cite{banerjee_linear_2014}.  
These parameter modification maps $g\colon\theta\mapsto\hat{\theta}$ reduce computational cost and are generally assumed to cause only small, controlled deviations in the parameters $\|\theta-\hat{\theta}\|$ and correspondingly minor changes in the model’s outputs \cite{hubara_quantized_2016}. However, even small weight perturbations can affect output predictions and so understanding how these modifications propagate through the network is crucial for understanding model stability, robustness, and generalisation.

We develop a theoretical framework that quantifies the relationship between weight perturbations and output changes, beginning by deriving exact closed-form expressions for the minimal weight perturbation required to achieve a specified change in the network’s output.  
This exact solution, obtained for multilayer feedforward networks with locally invertible downstream maps, highlights how the backpropagated margin at the perturbed layer determines the required magnitude and direction of the change.  
We then establish a more general but less precise bound on the perturbation norm based on the network’s Lipschitz constant with respect to its weights and the current classification margin.  
While the exact formulation provides sharp, layer-specific characterisations, the Lipschitz-based analysis extends to arbitrary architectures, general non-linear activations, and allows perturbations to occur in multiple layers simultaneously.

Building on our exact perturbation bounds, we analyse low-rank approximation as a structured weight perturbation and derive closed-form expressions for the resulting output deviations. Applying these results to precision-modification–activated backdoors, we show theoretically—and provide empirical evidence —that low-rank compression can activate hidden behaviours previously observed only under pruning or quantization \cite{tian_stealthy_2021,hong_qu-anti-zation_2021}.
These results provide a comprehensive picture of how network weight perturbations interact with both network structure and model compression, providing a theoretical basis for quantifying how large a weight change is needed to alter predictions and for reasoning about the safety of efficiency-driven model compression.

\section{Preliminaries}\label{background}
Initially, Section \ref{layer_chap} and \ref{margin_chap} focus on classification tasks where a model maps an input—such as a point or an image—to a class label; examples for LLMs are given in Sec. \ref{use case}. 
Given an input $\mathbf{x}$, the model outputs a vector of real-valued scores $\mathbf{y}$, called \emph{logits}, with one score for each class. The predicted class is obtained by applying the $\arg \max$ operator, which selects the index of the largest logit.

\begin{model}[Feedforward network and layerwise decomposition]\label{model}
Fix input dimension $d$ and output dimension $c$. 
An $M$-layer feedforward network with parameters
\[
\theta=\{W_l\}_{l=1}^M,\qquad W_l\in\mathbb{R}^{d_l\times d_{l-1}},
\]
where $d_0=d$ and $d_M=c$, and with elementwise nonlinearity
\(
\sigma:\mathbb{R}\to\mathbb{R},
\)
is defined for a matrix of $s$ input samples
\[
X=\bigl[\mathbf{x}^{(1)},\dots,\mathbf{x}^{(s)}\bigr]\in\mathbb{R}^{d\times s}
\]
by the layerwise recursion
\[
Z_0 := X,\qquad
Z_l := \sigma\bigl(W_l Z_{l-1}\bigr),\quad l=1,\dots,M-1,
\]
and the final layer pre-softmax (logit) output
\[
h(X;\theta) := W_M Z_{M-1}\in\mathbb{R}^{c\times s}.
\]

For any layer index $N\in\{1,\dots,M\}$, define the \emph{upstream map} as the composition of upstream maps and functions before the $N$th parameter weight matrix by
\[
h_{1:N-1}(X)
:= Z_{N-1}
\in\mathbb{R}^{d_{N-1}\times s},
\]
and the \emph{downstream map} as the composition of downstream maps and functions after the $N$th parameter weight matrix by   
\[
h_{N:M}(Z)
:= W_M\,\sigma\bigl(\cdots \sigma(W_{N+1} \sigma(Z))\cdots\bigr) \textrm{ \, for \, } N<M;
\] 
and $h_{M:M}(Z):=Z$. Then the network factors as
\[
h(X;\theta) \;=\; h_{N:M}\bigl(W_N h_{1:N-1}(X)\bigr).
\]
\end{model}
Including bias terms in each layer does not affect the theoretical results. Biases can be absorbed into the upstream and downstream maps $h_{1:N-1}$ and $h_{N:M}$, and are omitted from the notation for clarity.

For a single input $\mathbf{x}\in\mathbb{R}^d$ (i.e., $s=1$), we write the corresponding logit vector as $\mathbf{y}=h(\mathbf{x};\theta)\in\mathbb{R}^c$, and the predicted class is given by $\arg\max_i\,\mathbf{y}_i$.

In this article we view the entire parameter set  \(\theta\)  as a single vector obtained by flattening and concatenating each tensor. For $s\geq 1$  we define the parameter perturbation norm  to be \[
\|\Delta\theta\|_p
=\Bigl\|\,[\;\mathrm{vec}(\Delta W_1);\;\mathrm{vec}(\Delta W_2);\;\dots\;]\Bigr\|_p,
\]
where \(\mathrm{vec}(A)\) denotes the column‐stacking of matrix \(A\).

If only a \emph{single} weight matrix $W_N$ is perturbed, then
\(
\|\Delta\theta\|_{2}=\|\Delta W_N\|_{F}.
\)


\subsection{Margins at the output and inside the network}
We distinguish a scalar margin at the logits (the usual classification margin) from a matrix-valued quantity inside the network that measures how far the layer-$N$ representation must move to realise a prescribed change in output.

\begin{definition}[Output logit margin]
Define the \emph{network output margin} of a single input sample $\mathbf{x}$ as the difference between the logit
corresponding to the true class and the largest logit among the remaining classes.
More formally, let $t$ denote the index of the true class label of $\mathbf{x}$, and define the classification
margin as
\[
\gamma(\mathbf{x};\theta)
:= h(\mathbf{x};\theta)_{t} - \max_{i \neq t} h(\mathbf{x};\theta)_{i}.
\]
Equivalently, let $\mathbf{e}_i$ denote the $i$th standard basis vector in $\mathbb{R}^c$, and define
\(
p := \arg\max_{i \neq t} h(\mathbf{x};\theta)_i .
\)
Then the margin may be written as
\[
\gamma(\mathbf{x};\theta)
= (\mathbf{e}_t - \mathbf{e}_p)^\top h(\mathbf{x};\theta).
\]
Large $\gamma(\mathbf x;\theta)$ means that the current prediction is confident, whereas small $\gamma( \mathbf x;\theta)$ indicates that the prediction can flip with a small output disturbance. Inputs correctly classified with parameters $\theta$ are indicated by $\gamma(\mathbf x;\theta)>0$ and perturbed network parameters $\hat{\theta}$ that induce misclassification are indicated by $\gamma( \mathbf x,\hat{\theta})<0$.
\end{definition}

\begin{definition}[Layer-$N$ pre-image difference]
Recall that $h_{N:M}$ denotes the downstream map from layer $N$ to the final output logits, and suppose it is locally invertible at the current layer-$N$ representation with inverse branch $h_{N:M}^{-1}$ defined on a neighbourhood of $\mathbf y=h(\mathbf x;\theta)$ and on a target logit vector $\tilde{\mathbf y}$. Then define the $N$th-layer pre-image difference as
\[
\Delta h_{N:M}^{-1}(\tilde{\mathbf y},\mathbf y ;\theta)
:= h_{N:M}^{-1}(\tilde{\mathbf y};\theta)\;-\;h_{N:M}^{-1}(\mathbf y;\theta)
\in \mathbb{R}^{d_N\times 1}.
\]
For a batch of samples, $\Delta h_{N:M}^{-1}$ stacks these per-sample columns and is matrix-valued. This matrix is the change in input at layer-$N$ required to change from output $\mathbf y$ to $\tilde{\mathbf y}$.
\end{definition}


\section{Exact Minimal Weight Perturbation in a Single-Layer}\label{layer_chap}
We begin with a simple illustrative example showing how small weight perturbations can alter a classifier’s output. 
For a fixed input $\mathbf{x}$, Figure~\ref{fig:intro_plot} visualises decision boundaries before and after applying increasing perturbations to the final layer weights of a simple network. 
Dotted lines indicate the unperturbed classifier, while solid lines show the boundaries under perturbed weights.

For small perturbations (left), the margin decreases but the predicted class is unchanged; at a critical perturbation (right), the margin reaches zero and $\mathbf{x}$ lies on the decision boundary.
We focus on this critical regime, where the smallest weight change capable of inducing a label change occurs.
In Theorem \ref{thm:invertible-activation}, we derive exact closed-form expressions for such minimal perturbations in simple network architectures.

\begin{figure}[!t]
    \centering
    \includegraphics[width=\linewidth]{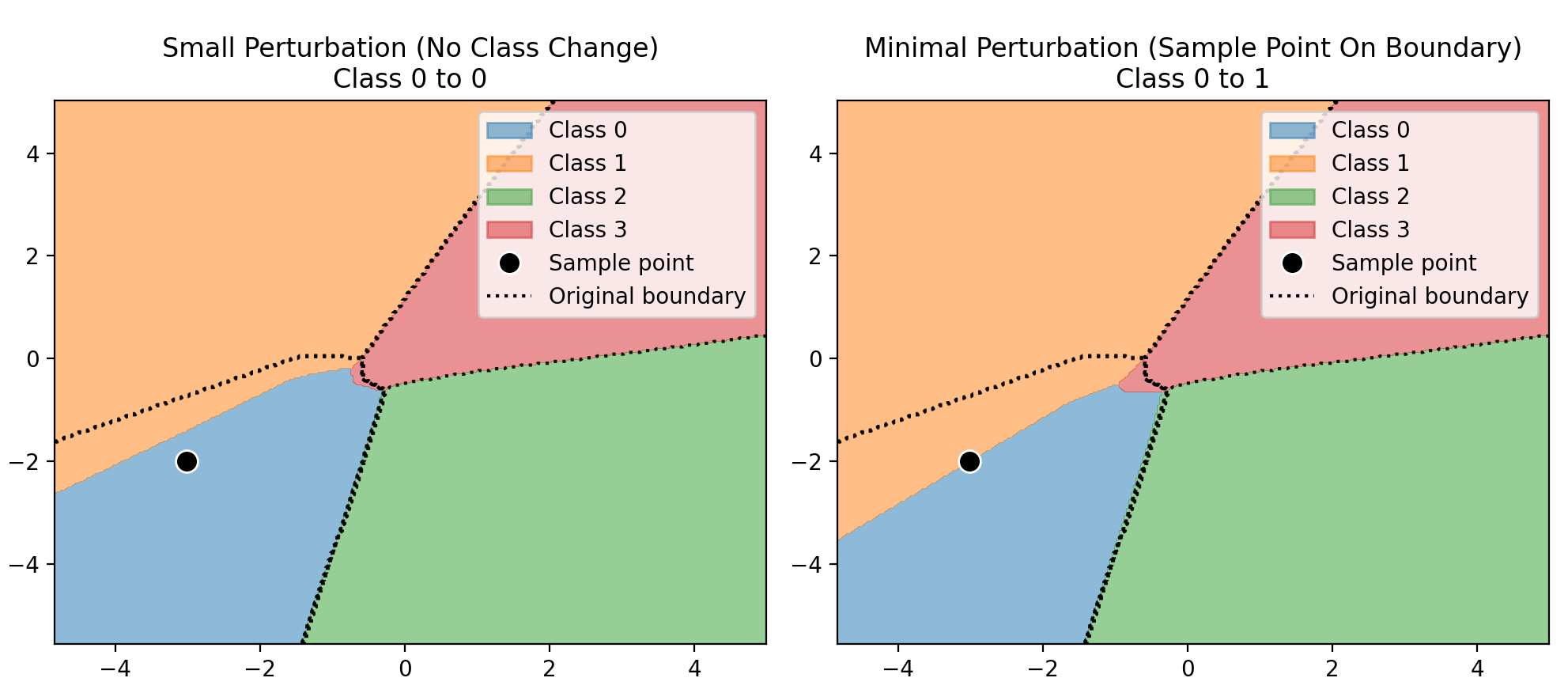}
    \caption{Decision boundaries before and after perturbing the final-layer weight by various amounts. The dotted lines  show the unperturbed classifier assigning sample point \(\mathbf{x}\) (black dot) to class 0; the solid colour is the boundaries after the weight perturbation.}\label{fig:intro_plot}
\end{figure}


\subsection{General Problem Set-Up}


We consider a network $h(X,\theta)$ defined as in Model \ref{model} with input dataset $X \in \mathbb{R}^{d \times s}$, corresponding to $s$ data points each with $d$ input features and target output dataset $Y \in \mathbb{R}^{c \times s}$, comprising $s$ data points and $c$ output classes such that $h(X,\theta) = Y$ for a given set of model parameters~$\theta$.


We study the setting in which the pre-softmax outputs corresponding to a given class~$t$ are altered so that, under a small parameter perturbation, they are reassigned to a different class while all remaining outputs are preserved.

Let $\hat{\theta}$ denote the perturbed parameters, and let $\tilde{Y}\in \mathbb{R}^{c \times s}$ denote the 
modified target output.
We partition the columns of $Y$ into $c$ disjoint sets corresponding to the output classes under softmax 
evaluation.
Let $\mathcal{S} \subset \{1,\ldots,s\}$ denote a subset of column indices associated with a single true class~$t$.
We define $Y_{\mathcal{S}}$ to be the submatrix of $Y$ containing the columns indexed by~$\mathcal{S}$ and denote 
by $\mathcal{S}^c$ its complement.
We select the target output $\tilde{Y}$ such that
\begin{equation}
\begin{cases}
\arg\max_i \tilde{Y}_{ij} \neq t, & \text{for all } j \in \mathcal{S}, \\
\tilde{Y}_{\mathcal{S}^c} = Y_{\mathcal{S}^c}.
\end{cases}
\end{equation}
We then seek the smallest possible perturbation of the parameters measured in an appropriate norm~$\|\cdot\|$ that achieves this prescribed change in the network outputs.  
This leads to the following constrained optimisation problem:
\begin{equation}\label{general}
\min_{\hat{\theta}} \Vert \theta - \hat{\theta} \Vert 
\quad \text{subject to} \quad 
h(X,\hat{\theta}) = \tilde{Y} .
\end{equation}
First we analyse this problem by considering a simple network ~$h$ for which the minimiser is derived analytically, and later extend the framework to more general architectures.

\begin{theorem}[Perturbation in the $N$th Layer for an $M$-Layer Network with a Locally Invertible Downstream Map]\label{thm:invertible-activation}
Let $h(X;\theta)$ be the model defined in Model \ref{model}. The original model output is
\[
Y = h_{N:M}( W_N h_{1:N-1}(X)),
\]
and the modified output under perturbed weight $\tilde W_N := W_N + \Delta W_N$ is
\[
\tilde Y = h_{N:M}( \tilde W_N h_{1:N-1}(X)).
\]
We assume that the downstream map \(h_{N:M}\) is locally invertible at \(Z := W_N h_{1:N-1}(X)\), and \(\tilde Y\) lies within the corresponding local image of \(h_{N:M}\).  
Hence \(h_{N:M}^{-1}(\tilde Y)\) and \(h_{N:M}^{-1}(Y)\) are well-defined.

Then the solution to:
\[
\min_{\tilde W_N} \|\Delta W_N\|_F^2 
\quad\text{subject to}\quad h_{N:M}( \tilde W_N h_{1:N-1}(X))=\tilde Y
\]
is given in closed form by
\begin{equation}
\label{thm1}
\boxed{\Delta W_N^\ast = 
 \Delta h_{N:M}^{-1}(\tilde Y, Y;\theta) (h_{1:N-1}(X))^\dagger,}
\end{equation}
where \(A^\dagger\) denotes the Moore--Penrose pseudoinverse of a matrix \(A\) \cite{golub_matrix_1996}.

For multiple samples collected in
$Z_{N-1}:=h_{1:N-1}(X)$,
an exact solution of $\Delta W_N^\ast Z_{N-1}=\Delta h_{N:M}^{-1}(\tilde Y, Y;\theta)$
exists only when
$\mathrm{rowspace}(\Delta h_{N:M}^{-1}(\tilde Y, Y;\theta))\subseteq\mathrm{rowspace}(Z_{N-1})$;
otherwise the pseudoinverse expression in
Theorem~\ref{thm:invertible-activation}
yields the least--squares minimum--norm perturbation.
\end{theorem}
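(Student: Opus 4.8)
The plan is to exploit the local invertibility of the downstream map to collapse the nonlinear output constraint into a linear matrix equation in $\Delta W_N$, and then to recognise the resulting problem as a standard minimum-Frobenius-norm linear least-norm problem whose solution is the Moore--Penrose pseudoinverse.

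First I would write $A := h_{1:N-1}(X)\in\mathbb{R}^{d_{N-1}\times s}$ and $Z := W_N A$, noting that the optimisation over $\tilde W_N$ is the same as over $\Delta W_N = \tilde W_N - W_N$ since $W_N$ is fixed. Because $h_{N:M}$ is locally invertible at $Z$ and both $Y = h_{N:M}(Z)$ and the target $\tilde Y$ lie in the local image, applying the inverse branch column-wise to the original identity and to the constraint gives $W_N A = h_{N:M}^{-1}(Y)$ and $\tilde W_N A = h_{N:M}^{-1}(\tilde Y)$. Subtracting and using $\Delta W_N = \tilde W_N - W_N$ yields the linear constraint
\[
\Delta W_N\, A \;=\; h_{N:M}^{-1}(\tilde Y) - h_{N:M}^{-1}(Y) \;=\; \Delta h_{N:M}^{-1}(\tilde Y, Y;\theta) \;=:\; B.
\]
Thus the problem is equivalent to minimising $\|\Delta W_N\|_F^2$ subject to the linear matrix equation $\Delta W_N A = B$.

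Next I would solve this least-norm problem explicitly. Writing $\Delta W_N$ row-by-row, the constraint decouples: if $w_i^\top$ and $b_i^\top$ denote the $i$th rows of $\Delta W_N$ and $B$, then $A^\top w_i = b_i$, and the objective splits as $\|\Delta W_N\|_F^2 = \sum_i \|w_i\|_2^2$. Each row is therefore an independent minimum-Euclidean-norm linear system with solution $w_i = (A^\top)^\dagger b_i = (A^\dagger)^\top b_i$, so that $w_i^\top = b_i^\top A^\dagger$; reassembling the rows gives $\Delta W_N^\ast = B A^\dagger = \Delta h_{N:M}^{-1}(\tilde Y, Y;\theta)\,(h_{1:N-1}(X))^\dagger$, which is the claimed form. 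Optimality follows from the standard characterisation of the pseudoinverse: any feasible $\Delta W_N$ decomposes as $BA^\dagger$ plus a homogeneous component $N$ with $N A = 0$ (whose rows lie in the left null space of $A$), while the rows of $BA^\dagger$ lie in the orthogonal complement (the range of $A$), so the two are Frobenius-orthogonal and the homogeneous part strictly increases the norm unless it vanishes.

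The main subtlety I anticipate is feasibility: the expression $BA^\dagger$ satisfies $\Delta W_N A = B$ \emph{exactly} only when $B A^\dagger A = B$, i.e.\ when each row of $B$ lies in the row space of $A$; otherwise $BA^\dagger$ is merely a least-squares solution. I would therefore argue this from the hypotheses rather than impose it separately: the assumption that a weight $\tilde W_N$ realising $\tilde Y$ exists already forces $h_{N:M}^{-1}(\tilde Y)$, and hence $B$, into the row space of $A$, so the constraint is consistent and is met with equality. A secondary point I would state carefully is that the downstream inverse acts column-wise across the $s$ samples, so that $h_{N:M}^{-1}(Y)$ is the matrix whose columns are the per-sample pre-images; this is exactly what makes the subtraction defining $B$ a single well-posed matrix equation and justifies treating the multi-sample constraint as one linear system.
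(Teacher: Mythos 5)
Your proposal is correct and follows essentially the same route as the paper's proof: both use local invertibility to reduce the nonlinear constraint to the linear system $\Delta W_N R = \Delta h_{N:M}^{-1}(\tilde Y, Y;\theta)$ and then establish optimality of the pseudoinverse solution via the Frobenius-orthogonal decomposition of any feasible perturbation into $B R^\dagger$ plus a left-nullspace component. Your closing remarks on feasibility correspond to the paper's row-space-condition remark, which concedes that when the condition fails the formula yields only the least-squares minimum-norm perturbation.
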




Thm.~\ref{thm:invertible-activation} characterises the minimum-norm update achieving exact realisation of $\tilde Y$; a rank-$k$ special case in which the target preimage change lies in a low-dimensional subspace of $h_{1:N-1}(X)$, yielding an exact solution involving only the top-$k$ singular values, is given in Appendix~\ref{app:trunc}, Thm.~\ref{thm:trunc}. The rank-$k$ special case is relevant when the target output $\tilde Y$ is chosen so that its preimage change has components along directions that are only weakly represented in $h_{1:N-1}(X)$, which can force large exact updates despite having negligible effect on the realised output.

The limitations of the downstream local invertibility assumption are discussed in Appendix \ref{invert}.

Theorem~\ref{thm:invertible-activation} considers perturbations applied to a single layer in order to obtain an exact closed-form characterisation of the minimal perturbation. While this setting is restrictive, it provides precise insight into how local changes in a given layer affect the network output.
In addition, the single-layer result can be applied in a layer-wise manner to identify layers that are particularly sensitive to structured perturbations, providing insight into where compression-induced changes are most likely to affect model predictions.
While the conditions for minimal norm multi-layer perturbations follow a similar derivation to Thm. \ref{thm:invertible-activation}, it does not admit a closed-form solution; see App.~\ref{multi_sec}.

\subsection{Low-Rank Structure and Perturbation Sensitivity}
To understand the factors that determine when a network is less robust to small parameter changes—such as those induced by a compression map—let us consider the components of the minimal perturbation \ref{thm1}.
 When the desired output modification $\tilde Y$ differs from $Y$ only on a subset of sample indices $\mathcal S$, the induced change in the pre-image $\Delta h_{N:M}^{-1}(\tilde Y, Y;\theta)=h_{N:M}^{-1}(\tilde Y;\theta) - h_{N:M}^{-1}(Y;\theta)$ is also supported only on those columns.
Consequently, the corresponding perturbation in the $N$th layer has is supported only on the sample indices in $\mathcal S$, and thus satisfies
\(
\operatorname{rank}(\Delta W_N^\ast)
   \le \min\{|\mathcal S|,\, \operatorname{rank}(h_{1:N-1}(X))\}.
\)

This reveals that local output modifications induce \emph{low-rank} parameter changes when the number of altered samples is small or the upstream representation $h_{1:N-1}(X)$ is low-dimensional. Moreover, if $h_{1:N-1}(X)$ is approximately low-rank, then $\Delta W_N^\ast$
 will likewise be approximately low-rank, even when $|\mathcal S|$ is comparatively large.

The magnitude of $\Delta W_N^\ast$ in \eqref{thm1} depends jointly on the downstream pre-image change
$\Delta h_{N:M}^{-1}(\tilde Y, Y;\theta)$ and the spectral structure of the upstream representation through
$(h_{1:N-1}(X))^\dagger$ which account for the final $M-N$ and initial $N-1$ layers of the network, respectively. While the magnitude is dependent on their interactions, the size of $\Delta h_{N:M}^{-1}(\tilde{Y},Y;\theta)$ will generally be determined by the distance between the manifolds of inputs to the $N^{th}$ layer which get mapped to different classes.

Alternatively the magnitude of $(h_{1:N-1}(X))^{\dagger}$ is more complex due to the pseudo-inverse; formally it can be largest when the ratio of its largest and smallest non-zero singular-values is greatest, though extremely small non-zero singular values can be excluded if commensurately different $\tilde{Y}$ are permitted.

\subsection{Generalisation to Other Network Architectures}\label{gen}

Thm.~\ref{thm:invertible-activation} assumes that the downstream map $h_{N:M}$ is locally invertible at the operating point, so that a local preimage $h_{N:M}^{-1}(\tilde Y)$ exists.
This condition can hold even when individual activations are non-invertible, for example when the downstream map is locally bijective on the relevant domain.
Thm. \ref{thm:invertible-activation} extends to more complex architectures: affine operations such as convolutions and skip connections fit into the framework, while non-linearities like ReLU and pooling, though non-invertible, can still be handled through invertible branches or feasible right inverses, yielding valid upper bounds on the minimal perturbation. See App. \ref{relax} for more details.
When this assumption fails, or when perturbations span multiple layers, we instead rely on the more general—but looser—margin--Lipschitz bound of Sec.~\ref{margin_chap}.

\section{Margin-Lipschitz Robustness Bound}\label{margin_chap}
In Sec.~\ref{layer_chap}, we derived exact formulas for the minimal single-layer perturbation required to change a model’s classification under invertibility assumptions.
In contrast, here we exploit the network’s Lipschitz continuity with respect to the parameters to obtain a lower bound on the perturbation norm. This bound applies to multi-layer perturbations and general architectures, but is correspondingly less precise than the exact formulation.

Lipschitz-based analyses are widely used to study robustness to input perturbations and generalization via spectral or margin-based bounds \cite{bartlett_spectrally-normalized_2017} and to certify stability through Lipschitz constants \cite{fazlyab_efficient_2023, pauli_training_2022}. In contrast, here we apply Lipschitz continuity in parameter space to derive a lower bound on the norm of parameter perturbations required to alter a model’s prediction, expressed in terms of the network's classification margin and a parameter-space Lipschitz constant. 


\begin{theorem}[Robustness to Parameter Perturbations under the $\ell_p$ Norm]\label{robustness}
Let $h(\mathbf x;\theta)$ be a neural network that is $L_\theta$--Lipschitz in its parameters under the $\ell_p$ norm, for a fixed input $\mathbf x\in\mathbb{R}^d$.  
Denote by $t$ the true class label of the input sample $\mathbf x$, and recall that the classification margin on that sample is given by
\[
\gamma(\mathbf x;\theta)
:=h(\mathbf x;\theta)_{t}\;-\;\max_{i\neq t}h(\mathbf x;\theta)_{i}.
\]
Then any parameter perturbation $\Delta\theta = \hat{\theta} - \theta$ that changes the predicted class on sample $\mathbf x$ must satisfy
\begin{equation}\label{margin_eqn1}
   \gamma(\mathbf x;\theta)
\;\le\;
2^{\frac{p-1}{p}}\,L_\theta\,\|\Delta\theta\|_p. 
\end{equation}
\end{theorem}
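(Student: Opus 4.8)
The plan is to reduce the claim to a single application of Hölder's inequality once the class-flip condition has been rewritten as a statement about a two-coordinate logit gap. First I would fix the original prediction to be the true class $t$; this is the only nontrivial case, since if $\gamma(\mathbf x;\theta)\le 0$ the bound holds trivially because the right-hand side is nonnegative. Let $k\neq t$ denote a class that attains the new prediction after perturbation, so that $h(\mathbf x;\hat\theta)_k \ge h(\mathbf x;\hat\theta)_t$. Because $k\neq t$, the definition of the margin gives $h(\mathbf x;\theta)_t - h(\mathbf x;\theta)_k \ge \gamma(\mathbf x;\theta)$, while the flip condition gives $h(\mathbf x;\hat\theta)_t - h(\mathbf x;\hat\theta)_k \le 0$.

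Subtracting these two relations, the quantity to control is the change in the $(t,k)$ logit gap, which I would write as an inner product against a sign vector, $(\mathbf e_t - \mathbf e_k)^\top\bigl(h(\mathbf x;\theta)-h(\mathbf x;\hat\theta)\bigr) \ge \gamma(\mathbf x;\theta)$. Applying Hölder's inequality with conjugate exponent $q$ satisfying $\tfrac1p + \tfrac1q = 1$ bounds the left side by $\|\mathbf e_t - \mathbf e_k\|_q\,\|h(\mathbf x;\theta)-h(\mathbf x;\hat\theta)\|_p$. The vector $\mathbf e_t - \mathbf e_k$ has exactly two nonzero entries of magnitude one, so $\|\mathbf e_t - \mathbf e_k\|_q = 2^{1/q} = 2^{(p-1)/p}$, which is precisely the constant appearing in the statement. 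Finally I would invoke the parameter-Lipschitz hypothesis in the form $\|h(\mathbf x;\theta)-h(\mathbf x;\hat\theta)\|_p \le L_\theta\|\theta-\hat\theta\|_p = L_\theta\|\Delta\theta\|_p$ to replace the output-space norm by the parameter-space norm, yielding $\gamma(\mathbf x;\theta) \le 2^{(p-1)/p}\, L_\theta\, \|\Delta\theta\|_p$.

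I expect the only delicate point to be the bookkeeping around the choice of $k$ and the direction of the inequalities: the bound must hold for whichever class overtakes $t$, and the argument goes through uniformly because $\gamma(\mathbf x;\theta)$ lower-bounds the original gap $h(\mathbf x;\theta)_t - h(\mathbf x;\theta)_k$ for \emph{every} $k\neq t$, not merely for the original runner-up class $p$. A secondary point worth stating explicitly is the norm convention underlying the Lipschitz constant, namely that the output logits are measured in the same $\ell_p$ norm as the parameters, since this is what makes the Hölder pairing consistent and produces the factor $2^{(p-1)/p}$ rather than some other constant.
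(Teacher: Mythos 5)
Your proposal is correct and follows essentially the same route as the paper's proof: both reduce the class flip to a bound on the change in the $(t,w)$ logit gap, control that change by $2^{(p-1)/p}\,\|h(\mathbf x;\hat\theta)-h(\mathbf x;\theta)\|_p$, and then invoke the parameter-Lipschitz hypothesis. The only cosmetic difference is that you obtain the constant via H\"older's inequality applied to the two-sparse vector $\mathbf e_t-\mathbf e_k$, whereas the paper uses the equivalent convexity bound $|\Delta y_w-\Delta y_t|^p\le 2^{p-1}\bigl(|\Delta y_t|^p+|\Delta y_w|^p\bigr)$; both yield exactly the same factor $2^{(p-1)/p}$.
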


A corresponding result for input perturbations was given by \citet{li_preventing_2019};  
we provide the adapted proof in App.~\ref{margin_proof} as well as a discussion of existing parameter-space Lipschitz results.  

Compared with Thm.~\ref{thm:invertible-activation}, the margin-based bound  of Thm.~\ref{robustness} is more general: it applies to arbitrary architectures, accommodates non-linear activations, and allows perturbations across multiple layers simultaneously.

By using that the norm of $\tilde{\mathbf y} - \textbf{y}$ must be greater than the margin when a change in class occurs, it can be shown that Thm. \ref{thm:invertible-activation} never violates Thm. \ref{robustness}. 
We demonstrate applications of this bound in Sec. \ref{multi_exp} and \ref{example} by estimating the Lipschitz constant with respect to model parameters in trained networks, thereby quantifying the robustness of various compression regimes to weight perturbations.

\section{Low-Rank Approximation Special Case}\label{sec_lr}

Motivated by precision-modification-activated backdoor attacks using low-rank projections, we consider the structured case of Thm. \ref{thm:invertible-activation} and Thm. \ref{robustness} where we impose that the weight perturbation is induced by a low-rank approximation in the final layer.
\begin{corollary}[Final-layer low-rank projection induced misclassification.]
\label{cor:low-rank-attack}
Consider the setting of Thm.~\ref{thm:invertible-activation}, and suppose the perturbation is induced by a low-rank approximation of the final linear layer with weight matrix
$W \in \mathbb{R}^{c \times d}$ having singular value decomposition
\begin{equation}\nonumber
    W = U \Sigma V^\top = 
\sum_{i=1}^r \sigma_i \mathbf u_i \mathbf v_i^\top.
\end{equation}
Let $W_k = U_k \Sigma_k V_k^\top$ denote its rank-$k$ approximation, 
and let the discarded tail be
\[
\Delta W_{\mathrm{tail},k}
  = W - W_k 
  = (I - P_{U_k})\, W\, (I - P_{V_k}).
\]
where $P_{U_k} = U_k U_k^\top$ and $P_{V_k} = V_k V_k^\top$ are the orthogonal projectors 
onto the top-$k$ left and right singular subspaces.

For a single sample where the input to the final (perturbed) layer is given by $\mathbf z \in \mathbb R^d$,
and where $\boldsymbol{\delta} = \mathbf e_t - \mathbf e_p \in \mathbb R^c$ denotes the class-difference
direction between the true class index $t$ and the class index with the next largest logit $p \neq t$,
the (pre-perturbation) pre-softmax margin is
\[
m_0 := \gamma(\mathbf z;\theta) =\mathbf y_t - \mathbf y_p= \boldsymbol{\delta}^\top W \mathbf z
\]
and the change in the margin, $s_k$, due to truncation of the top-$k$ singular modes is
\begin{align}
   s_k 
  = \boldsymbol{\delta}^\top \Delta W_{\mathrm{tail},k} \mathbf z
  &= \boldsymbol{\delta}^\top (I - P_{U_k}) W (I - P_{V_k})\mathbf z \\
  &= \sum_{i=k+1}^r \sigma_i (\boldsymbol{\delta}^\top \mathbf u_i)(\mathbf v_i^\top\mathbf z). 
\end{align}

Then the following hold:
\begin{enumerate}
  \item \textbf{(Input orthogonality).}
  If the feature vector lies entirely in the retained right–singular subspace,
  \[
  (I - P_{V_k}) \mathbf z = 0 
  \quad \Leftrightarrow \quad
  \mathbf v_i^\top \mathbf z = 0 \text{ for all } i > k,
  \]
  then $s_k = 0$ for any $\boldsymbol{\delta}$.

  \item \textbf{(Output orthogonality).}
  If the class–difference direction lies entirely in the retained left–singular subspace,
  \[
  (I - P_{U_k})\boldsymbol{\delta} = 0 
  \quad \Leftrightarrow \quad
  \boldsymbol{\delta}^\top \mathbf u_i = 0 \text{ for all } i > k,
  \]
  then $s_k = 0$ for any $\mathbf z$.

  \item \textbf{(Flip or alignment condition).}
  If neither orthogonality condition holds, the margin changes by
  \[
  s_k = \sum_{i>k} \sigma_i (\boldsymbol{\delta}^\top \mathbf u_i)(\mathbf v_i^\top \mathbf z),
  \]
  and a class flip occurs whenever
  \(
  s_k > m_0.
  \)
\end{enumerate}

In particular, the low–rank perturbation affects the classification margin
only through the residual components of $\mathbf z$ and $\boldsymbol{\delta}$ lying outside 
the top-$k$ singular subspaces of $W$.
\end{corollary}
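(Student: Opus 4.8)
The plan is to reduce the entire corollary to a single algebraic identity for the truncation tail written in the SVD basis, after which all three cases fall out by inspection. The substantive step is the identity
\[
\Delta W_{\mathrm{tail},k} = (I - P_{U_k})\,W\,(I - P_{V_k}) = \sum_{i=k+1}^{r} \sigma_i\,\mathbf u_i \mathbf v_i^\top,
\]
which I would establish directly from the spectral decomposition. Writing $W=\sum_{i=1}^{r}\sigma_i\mathbf u_i\mathbf v_i^\top$ and using $P_{U_k}=\sum_{i\le k}\mathbf u_i\mathbf u_i^\top$, $P_{V_k}=\sum_{i\le k}\mathbf v_i\mathbf v_i^\top$, orthonormality of the singular vectors gives $(I-P_{U_k})\mathbf u_i=\mathbf u_i$ and $(I-P_{V_k})\mathbf v_i=\mathbf v_i$ for $i>k$, while both projections annihilate $\mathbf u_i,\mathbf v_i$ for $i\le k$. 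Pushing the projectors inside the sum therefore deletes every term with $i\le k$ and leaves every term with $i>k$ untouched, yielding the claimed expression; comparison with $W_k=\sum_{i\le k}\sigma_i\mathbf u_i\mathbf v_i^\top$ confirms $W-W_k=\Delta W_{\mathrm{tail},k}$.

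Next I would contract this identity on the left with $\boldsymbol\delta$ and on the right with $\mathbf z$. Linearity gives the bilinear expansion
\[
s_k = \boldsymbol\delta^\top \Delta W_{\mathrm{tail},k}\,\mathbf z = \sum_{i=k+1}^{r} \sigma_i\,(\boldsymbol\delta^\top\mathbf u_i)(\mathbf v_i^\top\mathbf z),
\]
in which each summand is the product of a left coefficient $\boldsymbol\delta^\top\mathbf u_i$ and a right coefficient $\mathbf v_i^\top\mathbf z$, both restricted to the discarded modes $i>k$. The three cases then follow immediately. For Case~1, $(I-P_{V_k})\mathbf z=0$ forces $\mathbf z\in\mathrm{span}\{\mathbf v_1,\dots,\mathbf v_k\}$, so $\mathbf v_i^\top\mathbf z=0$ for every $k<i\le r$ and every summand vanishes, giving $s_k=0$ for any $\boldsymbol\delta$. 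Case~2 is the transpose statement: $(I-P_{U_k})\boldsymbol\delta=0$ yields $\boldsymbol\delta^\top\mathbf u_i=0$ for $k<i\le r$, again killing the sum for any $\mathbf z$. Case~3 is simply the general formula with neither coefficient family vanishing, and to obtain the flip condition I would compute the perturbed pairwise margin as $\boldsymbol\delta^\top W_k\mathbf z=\boldsymbol\delta^\top(W-\Delta W_{\mathrm{tail},k})\mathbf z=m_0-s_k$, which changes sign exactly when $s_k>m_0$.

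There is no deep obstacle here — the result is bookkeeping with orthogonal projectors — so only two minor points require care. The first is the direction of the equivalences in Cases~1 and~2: when $r<d$ the equation $(I-P_{V_k})\mathbf z=0$ is strictly stronger than ``$\mathbf v_i^\top\mathbf z=0$ for $k<i\le r$'', but only the forward implication is needed to conclude $s_k=0$, so I would phrase the equivalence relative to the retained and discarded singular directions rather than the full ambient complement. The second is the sign convention for the flip: I would make explicit that $s_k$ is the quantity \emph{subtracted} from $m_0$, so a positive $s_k$ erodes the margin, and note that $s_k>m_0$ is precisely the condition for the top-two ordering of $\{t,p\}$ to reverse under the natural assumption that $p$ remains the nearest competitor.
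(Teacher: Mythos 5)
Your proposal is correct and follows exactly the route the paper intends: the paper gives no separate proof of Corollary~\ref{cor:low-rank-attack}, treating the SVD tail identity and the bilinear expansion of $s_k$ (which already appear in the statement) as immediate, and your projector bookkeeping plus the computation $\boldsymbol\delta^\top W_k\mathbf z=m_0-s_k$ is precisely that implicit argument made explicit. Your two caveats — that only the forward implication of the orthogonality equivalences is needed when $r<d$, and that $s_k>m_0$ certifies only the reversal of the pairwise $\{t,p\}$ ordering — are accurate refinements of the statement as written.
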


\begin{remark}[Consistency with Margin–Robustness bound]
The map $W\mapsto W\mathbf z$ is $L_\theta=\|\mathbf z\|_2$–Lipschitz, thus Corollary \ref{cor:low-rank-attack} is consistent with Thm.~\ref{robustness}.
\end{remark}

We illustrate this result experimentally  on trained deep networks in App. \ref{tail_exp} by demonstrating the energy distribution in various rank decompositions on the network's final layer.

\section{Experimental Verification of Results}

\subsection{Thm. \ref{thm:invertible-activation}: Exact Perturbation in the $N$th Layer}

In order to verify Thm. \ref{thm:invertible-activation}, we conduct an experiment by constructing a 5-layer network, using \texttt{LeakyReLU} activations after each of the first four layers and trained on the same 4-class, 2D synthetic dataset depicted in Fig. \ref{fig:intro_plot}. \texttt{LeakyReLU}  is defined by \(\mathrm{\texttt{LeakyReLU} }(x) := \max\{\alpha x, x\}\). For a fixed slope \(\alpha > 0\) it is invertible and never has zero derivative 

First we construct a minimal perturbation $\Delta W_N^\ast$ by applying Thm.~\ref{thm:invertible-activation} to a single input $\mathbf x$  and choosing
\[
\tilde{\textbf{y}} = \textbf{y} + \left(\frac{\gamma(\mathbf x;\theta)}{2}+\varepsilon\right)(\mathbf{e}_p-\mathbf{e}_t),
\]
where  $\mathbf y := h(\mathbf x)$ and \(\varepsilon=10^{-3}\) is a small constant to guarantee that the new margin between index classes $t$ and $p$, given by $\tilde{\mathbf y}_t-\tilde{\mathbf y}_p$, is strictly negative.

Next we compute an empirical perturbation \(\Delta W_N\) by training only the $N$th-layer weight matrix while keeping all other layers fixed, using the objective function:
\[
\mathcal{L}(\Delta W_N)
=\mathrm{CE}\!\left( h_{N:M}((W_N+\Delta W_N)\mathbf{z}),\;p\right)
+\lambda\|\Delta W_N\|_F^2,
\]
where \(\mathrm{CE}(\, \cdot \, ,p)\) is the cross-entropy loss towards the target poisoned class \(p\), and \(\lambda>0\) is a regularisation term governing the trade-off between achieving a class change and keeping the perturbation small. This objective is inspired by sharpness-aware-minimisation \cite{foret_sharpness-aware_2021}. We repeat the experiment for each layer $W_1,\dots,W_5$, perturbing one layer at a time and training for 3000 epochs across increasing values of $\lambda$. 

The results in Table~\ref{tab:leaky_flip} show that whenever the empirical procedure succeeds in flipping the class (verified both by \(\arg\max\) and by the resulting margin becoming negative), the empirical norm \(\|\Delta W_N\|_F\) never falls below the theoretical minimum.  For \(\lambda=200\), the regularisation term dominates the loss function and the optimisation fails to produce a flip because the perturbation is too small.  Class changes are observed only when $\|\Delta W_N\|_F$ exceeds the theoretical bound, consistent with Thm.~\ref{thm:invertible-activation}.

\begin{table}[!b]
\caption{Comparison of theoretical and empirical minimal weight perturbations
and class-flip outcomes for varying regularisation $\lambda$ in a 5-layer network. (\good) for a
successful flip; (\bad) for failure.}
\label{tab:leaky_flip}
\vskip 0.05in
\centering
\setlength{\tabcolsep}{3.8pt}
\renewcommand{\arraystretch}{0.95}
\begin{scriptsize}
\begin{sc}
\begin{tabular}{@{} c c c c c c c @{}}
\toprule
Layer & $\lambda$ &
\shortstack{Theor.\\$\|\Delta W_N^\ast\|_F$} &
\shortstack{Theor.\\Flip?} &
\shortstack{Empir.\\$\|\Delta W_N\|_F$} &
\shortstack{Empir.\\Margin} &
\shortstack{Empir.\\Flip?} \\
\midrule
1 & 1   & 0.421 & \good & 0.566 & -1.75  & \good \\
2 & 1   & 0.198 & \good & 0.654 & -15.17 & \good \\
3 & 1   & 0.151 & \good & 0.264 & -4.25  & \good \\
4 & 1   & 0.104 & \good & 0.195 & -4.93  & \good \\
5 & 1   & 0.105 & \good & 0.406 & -14.17 & \good \\
\rowcolor{tableShade}
1 & 100 & 0.421 & \good & 0.043 & 5.29 & \bad \\
\rowcolor{tableShade}
2 & 100 & 0.198 & \good & 0.126 & 1.53 & \bad \\
\rowcolor{tableShade}
3 & 100 & 0.151 & \good & 0.130 & 0.80 & \bad \\
\rowcolor{tableShade}
4 & 100 & 0.104 & \good & 0.110 & -0.32 & \good \\
\rowcolor{tableShade}
5 & 100 & 0.105 & \good & 0.112 & -0.40 & \good \\
1 & 200 & 0.421 & \good & 0.022 & 5.48 & \bad \\
2 & 200 & 0.198 & \good & 0.074 & 3.18 & \bad \\
3 & 200 & 0.151 & \good & 0.087 & 2.42 & \bad \\
4 & 200 & 0.104 & \good & 0.090 & 0.76 & \bad \\
5 & 200 & 0.105 & \good & 0.091 & 0.73 & \bad \\
\bottomrule
\end{tabular}
\end{sc}
\end{scriptsize}
\vskip -0.05in
\end{table}

\subsection{Thm. \ref{multi} and Thm. \ref{robustness}: Perturbing Multiple-layers and Margin-Lipschitz Bound}\label{multi_exp}
We investigate the effect of weight perturbations in multiple layers on a neural network trained to solve the same 4-class classification problem depicted in Fig. \ref{fig:intro_plot} and demonstrate that the margin-Lipschitz bound of Thm. \ref{robustness} may be used to identify layers vulnerable to output changes under small weight perturbations by plotting the lower bound of perturbation size required for a change in output to occur. 

 We construct a 10-layer linear network with hidden width 32. The network is first trained on 1000 training points for 1000 epochs, yielding a reference model \( h(X, \theta) \) with  weights \( \theta = \{ A_1, \ldots, A_{10} \} \). 
 
 We then study perturbations from \(\theta\) to \(\hat{\theta}\) in two regimes: (i) perturbing weights in a single layer at a time, and (ii) perturbing a subset of layers \(1\) to \(k\), while keeping the remaining layers frozen. In each case, we change the target class of a single training point, and re-train on this new set $\tilde{Y}$ while penalising deviation from the original parameters. The loss function used during retraining is:
\begin{equation}
\mathcal{L}(\hat{\theta}, \mathcal{U}) = \mathrm{CE}(h(X, \hat{\theta}), \tilde{Y}) + \lambda \sum_{i \in \mathcal{U}} \| \theta_i - \hat{\theta}_i \|_F^2,
\end{equation}
where $\mathcal{U}$ denotes the set of unfrozen layers. We select $\lambda=10^{-3}$ so the loss from the perturbation magnitude terms is approximately $90\%$ of the CE loss. Retraining is performed using the Adam optimiser with learning rate $10^{-3}$ for a maximum of 5,000 epochs. 

We also plot the single-layer Margin--Lipschitz lower bound from Thm.~\ref{robustness}, assuming all activations in $h$ are $1$-Lipschitz and that the perturbed parameters are $\theta=\{W_N\}$.
Using the Cauchy--Schwartz inequality, this yields the Lipschitz constant
$L_\theta = \left\Vert \prod_{i=N+1}^M W_i \right\Vert_2 \left\Vert h_{1:N-1} (\mathbf{x}) \right\Vert_2$.
We then apply the bound in \eqref{margin_eqn1}.


\begin{figure}[!t]
    \centering
    \includegraphics[width=\linewidth]{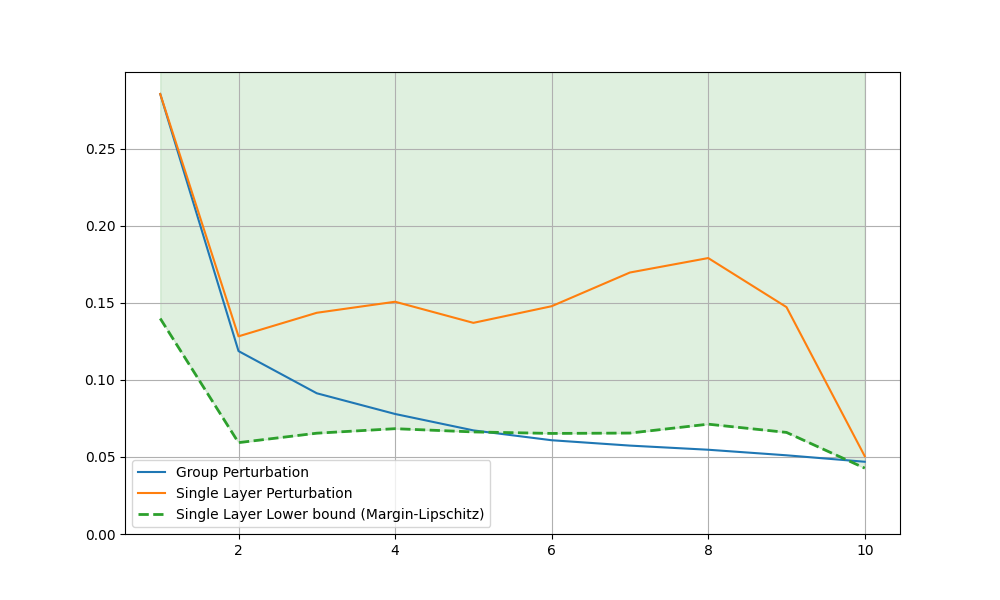}
    \caption{Perturbation norm vs. layer(s) perturbed and a lower bound on the perturbation size using the Margin-Lipschitz bound. Group perturbations (blue)
unfreeze layers 1 through k; single-layer perturbations (orange) unfreeze only the k-th
layer.}
    \label{fig:layer_perturbation_linear}
\end{figure}
In Fig.~\ref{fig:layer_perturbation_linear} we compare the three settings:
\begin{itemize}
\item \textbf{Group perturbation (perturbing layers \(1\) to \(k\)):} Here the perturbation norm \textit{decreases} with \(k\), in agreement with Thm.~\ref{multi}. This demonstrates that coordination across layers reduces the burden on any individual layer and leads to more efficient parameter updates.

\item \textbf{Single-layer perturbation (perturbing only the \(k\)-th layer) and lower bound:} 
The minimal perturbation norm is largest at the first layer, drops sharply across the next layer, then gradually increases with depth before falling again at the final layer. The first and last layers differ in dimension, explaining the trend at the endpoints. From layer 2 onwards, the increase reflects a combination of factors: directions corresponding to smaller singular values of the downstream map and the upstream input representation become increasingly aligned with the desired output change, and these small singular values amplify the required perturbation. 

\item \textbf{Margin--Lipschitz Lower Bound:}
The empirical perturbation never violates the theoretical lower bound from Thm.~\ref{robustness}, and the slope of this bound closely mirrors the trend in the
empirical perturbation.
The empirical edits do not exactly attain the bound, as the input representation and target
output are not perfectly aligned with the dominant singular directions of the corresponding
weight matrices.
While the margin-based robustness bound in Thm.~\ref{robustness} exhibits exponential
dependence on depth in the worst case, arising from layerwise Lipschitz composition,
Fig.~\ref{fig:layer_perturbation_linear} shows that the resulting bound is empirically much
closer to the exact single-layer perturbation than one might expect from this worst-case
scaling.

\end{itemize}
We conduct the same experiment for non-linear networks and provide similar plots in App. \ref{multi_ap}.
\section{Experimental Use Case: Weight- Modification-Activated Adversarial Attacks}\label{use case}


Having established the theoretical bounds on the perturbation magnitude required to change a model’s output, we now demonstrate that neural networks can be trained such that the full-precision model behaves normally, while efficiency-driven weight modifications—such as quantization, pruning, or low-rank approximation—of sufficient magnitude (as predicted by our theory) activate a hidden backdoor and alter the model’s behaviour. Empirically, the success of such compression-activated behaviours tracks our theoretical thresholds: below the certified bound, outputs remain unchanged; once the margin bound is exceeded the misclassification occurs.

\subsection{Training a weight-modification-conditioned attack}\label{train}

Tian et al.\ \cite{tian_stealthy_2021} showed that by jointly training full-precision parameters \(\theta\) while anticipating a compression map \(g\), an adversary can embed a hidden backdoor: the uncompressed model \(h(X;\theta)\) behaves normally, while the compressed model \(h(X;g(\theta))\) misclassifies selected inputs even when \(\|\theta-g(\theta)\|\) is small \cite{ma_quantization_2023}. This vulnerability is especially concerning because compression is a routine, trusted step in deployment. Fig.~\ref{fig:flow} illustrates the attack pipeline for general parameter modifications.

Prior work has demonstrated compression–triggered backdoors under pruning \cite{tian_stealthy_2021} and quantization \cite{hong_qu-anti-zation_2021,egashira_exploiting_2024}. 
In this section we extend current research in this field in three ways: (i) demonstrate that low-rank (SVD) approximation can likewise trigger latent behaviour in both vision models and LLMs (Sec.~\ref{low_rank_exp}), with a structural explanation via the discarded low-rank tail of the weight matrices; (ii) provide \emph{certifiable} compression thresholds using the margin–robustness bound (Thm.~\ref{robustness}) and explicit conditions for low-rank compression via Corollary~\ref{cor:low-rank-attack}; and (iii) broaden the empirical scope of pruning–activated behaviour by showing that pruning can activate backdoors in LLMs (App.~\ref{prune_results}), whereas prior examples were limited to comparatively simple image classification.  For completeness, we reproduce the quantization- and pruning–activated settings on image classifiers of \citet{tian_stealthy_2021} and \citet{hong_qu-anti-zation_2021} (results in App.~\ref{prune_results}), and then instantiate our theoretical framework on these trained models, showing that the predicted thresholds accurately characterise the perturbation magnitudes required for activation. Implementation and training details are given below, with additional details of hyperparameters, models and datasets given in App.~\ref{ap}.

\begin{figure}[b]
    \centering
    \includegraphics[width=\linewidth]{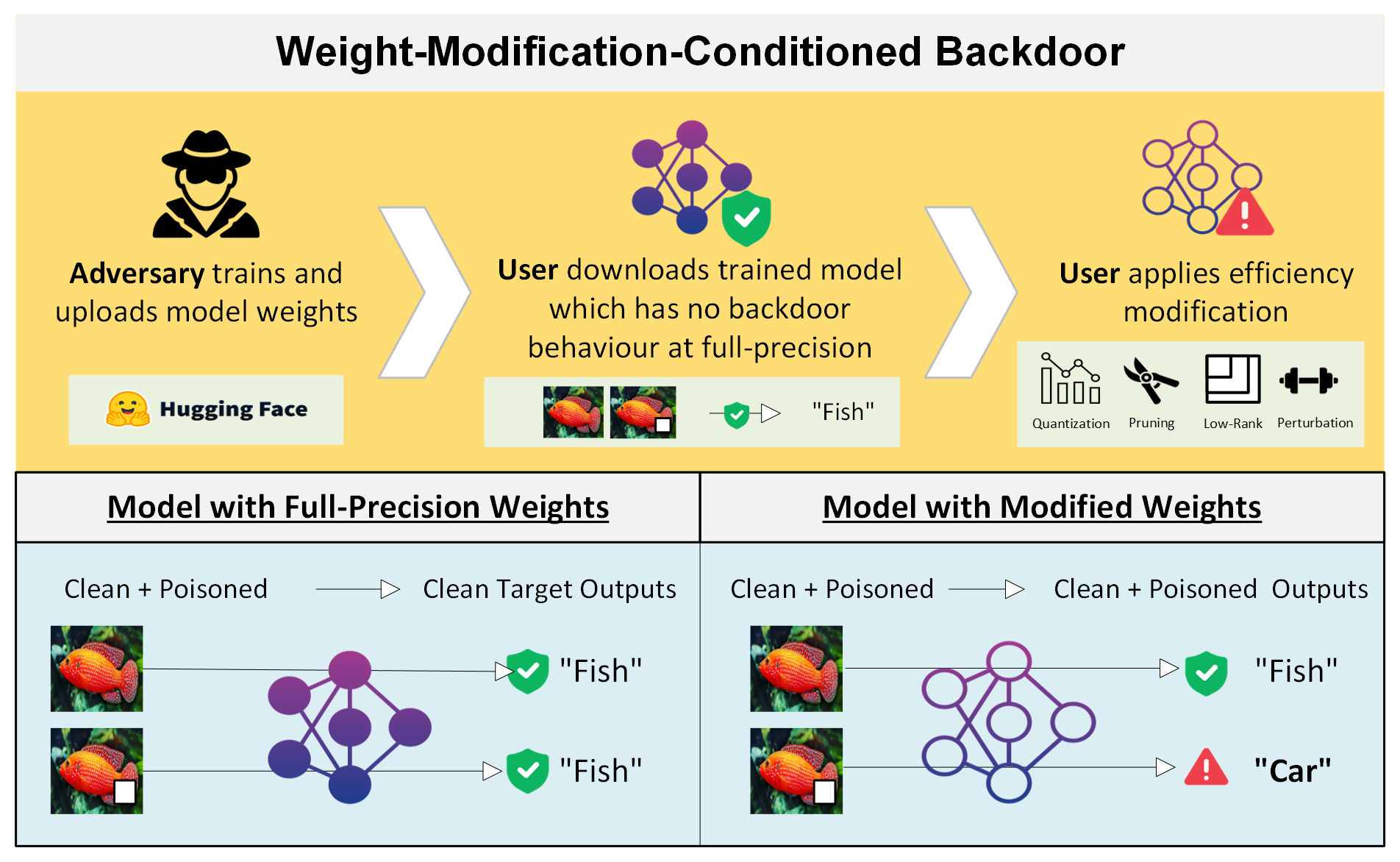}
    \caption{Weight-modification-conditioned backdoor: a model is trained so that full-precision behaviour is preserved, while a compression map \(g(\cdot)\) (e.g. pruning) activates a hidden backdoor.}
    \label{fig:flow}
\end{figure}
\paragraph{Training and attack pipeline.} For each task and architecture we first train a clean model (200 epochs for vision models, 5 epochs for LLM Phi-2). Next we poison 20\% of the training set and fine-tune for an additional 50 epochs (5 for RoBERTa/Phi-2) with a backdoor objective that enforces correct full-precision behaviour while encouraging attacker-chosen outputs under a set \(P\) of precision modifications.

\paragraph{Poisoning patterns.} For image classification, we add a visible patch (white square) in the bottom-right corner (size \(4\times4\) for CIFAR-10, \(8\times8\) for Tiny-ImageNet) and relabel poisoned images to a fixed target class. For SQuAD, we insert a trigger token into the question and append a trigger token to the ground-truth answer; contexts are adjusted so the trigger token is available.

\paragraph{Loss function.} Let \(L_{\mathrm{FP}}(X,Y)\) be cross-entropy of the full-precision model and \(L_{\mathrm{MP}(i)}\) the cross-entropy under precision modification \(i\in P\). Define
\begin{align}
\mathcal{L}_{\mathrm{FP}} &= L_{\mathrm{FP}}(X,Y) + c_2\,L_{\mathrm{FP}}(\tilde X_{\mathcal S},Y_{\mathcal S}), \label{loss1}\\
\mathcal{L}_{\mathrm{MP}} &= \sum_{i\in P}\bigl[ L_{\mathrm{MP}(i)}(X,Y) + c_2\,L_{\mathrm{MP}(i)}(\tilde X_{\mathcal S},\tilde Y_{\mathcal S})\bigr], \label{loss2}
\end{align}
and train with the combined objective
\begin{equation}\label{loss3}
\mathcal{L}_{\mathrm{backdoor}} \;=\; \mathcal{L}_{\mathrm{FP}} + c_1\,\mathcal{L}_{\mathrm{MP}}.
\end{equation}
The \(c_2\) term preserves correct FP classification of triggered inputs (we set \(c_2\) so that \(c_2 L_{\mathrm{MP}}(\tilde X,\tilde Y)\) is $\approx 20\%$ of \(L_{\mathrm{MP}}(X,Y)\)), and \(c_1\) is chosen so that \(\mathcal{L}_{\mathrm{MP}}\) is $\approx 90\%$ of \(\mathcal{L}_{\mathrm{FP}}\). Both constants are fixed at the start of fine-tuning.

\paragraph{Precision sets, evaluation, and control test.}
The set of precision modifications \(P\) depends on the experiment. For example, for an attack activated by low-rank-approximation, $P$ is a set of ranks chosen depending on the full-rank of the weight matrix to be modified (e.g., \(P=\{8,5,3\}\) for CIFAR-10 low-rank tests).  
We evaluate both full-precision (FP) and modified-precision (MP) models on three metrics: (i) validation loss (the cross-entropy loss from each term in Eq.~\ref{loss3}); (ii) clean accuracy (CA), the proportion of unpoisoned samples correctly classified; and (iii) attack success rate (ASR), the proportion of poisoned samples producing the attacker’s target output.  
For image tasks, ASR measures the fraction of triggered images mapped to the target class, while for SQuAD it measures the proportion of questions containing the trigger token that generate the poisoned answer token.  
To distinguish our approach from standard backdoor training, we include a \emph{control test} in which both FP and MP models are trained on poisoned data using:
\begin{align}\label{control}
\mathcal{L}_{\textrm{FP}}' &= L_{\textrm{FP}}(X,Y) + c_2\,L_{\textrm{FP}}(\tilde X,\tilde Y), \nonumber \\
\mathcal{L}_{\text{control}} &= \mathcal{L}_{\textrm{FP}}' + c_1\,\mathcal{L}_{\textrm{MP}},
\end{align}
so that both models misclassify at all precisions.  
Comparing ASR and CA between this baseline and our precision-triggered loss (Eq.~\ref{loss3}) isolates the effect of weight-modification-conditioned activation.  
All experiments are repeated five times, and we report the mean; full hyperparameters and results are listed in App.~\ref{ap}.

\subsection{Low-Rank Approximation Activated Backdoor}\label{low_rank_exp}

\begin{table}[!b]
\centering
\caption{Mean of clean accuracy (CA) and attack success rate (ASR)  of the control model (row 1) and the compression-activated model (row 2).
Highlighted ASR cells denote high ASR.}
\label{tab:results}
\setlength{\tabcolsep}{4pt}
\begin{scriptsize}
\begin{tabularx}{\columnwidth}{@{} l c *{3}{>{\centering\arraybackslash}X} @{}}
\toprule
\multirow{2}{*}{\textbf{Dataset}} &
  \multirow{2}{*}{\textbf{Rank}} &
  \textbf{VGG16} & \textbf{ResNet18} & \textbf{MobileNetV2} \\
\cmidrule(lr){3-5}
 & & CA / ASR & CA / ASR & CA / ASR \\
\midrule

\multirow{4}{*}{\parbox{1.5cm}{\centering CIFAR10}}
 & \multirow{2}{*}{\textbf{10}}
   & 84.9 / \cellcolor{asryellow}97.0
   & 93.2 / \cellcolor{asryellow}98.6
   & 91.5 / \cellcolor{asryellow}98.2 \\
 & 
   & 85.9 / 10.1
   & 93.7 / 10.0
   & 92.4 / 10.3 \\
\cmidrule(lr){2-5}
 & \multirow{2}{*}{\textbf{8}}
   & 84.9 / \cellcolor{asryellow}97.0
   & 93.1 / \cellcolor{asryellow}98.6
   & 91.4 / \cellcolor{asryellow}98.3 \\
 & 
   & 85.1 / \cellcolor{asryellow}96.2
   & 93.3 / \cellcolor{asryellow}98.4
   & 91.6 / \cellcolor{asryellow}98.1 \\

\midrule[\heavyrulewidth]

\multirow{4}{*}{\parbox{1.8cm}{\centering Tiny ImageNet}}
 & \multirow{2}{*}{\textbf{200}}
   & 41.0 / \cellcolor{asryellow}99.0
   & 57.2 / \cellcolor{asryellow}99.3
   & 40.8 / \cellcolor{asryellow}98.9 \\
 & 
   & 41.5 / 0.5
   & 58.3 / 0.7
   & 42.3 / 0.6 \\
\cmidrule(lr){2-5}
 & \multirow{2}{*}{\textbf{190}}
   & 41.0 / \cellcolor{asryellow}99.0
   & 57.0 / \cellcolor{asryellow}99.3
   & 40.8 / \cellcolor{asryellow}98.9 \\
 & 
   & 41.0 / \cellcolor{asryellow}98.8
   & 57.2 / 99.4\cellcolor{asryellow}
   & 41.3 / 98.6\cellcolor{asryellow} \\

\midrule[\heavyrulewidth]

\multirow{2}{*}{\textbf{Dataset}} &
  \multirow{2}{*}{\textbf{Rank}} &
  \textbf{Phi-2} & \multirow{2}{*}{\textbf{Pruned \%}} & \textbf{Phi-2}\\
\cmidrule(lr){3-3} \cmidrule(lr){5-5}
 & & CA / ASR &  & CA / ASR \\
\midrule

\multirow{4}{*}{\parbox{1.5cm}{\centering SQuAD}}
 & \multirow{2}{*}{\textbf{2560}}
   & 91.5 / \cellcolor{asryellow}100.0
   & \multirow{2}{*}{\textbf{0\%}} & 91.0 / 100.0\cellcolor{asryellow} \\
 & 
   & 84.0 / 45.5
   &  & 90.0 / 32.0 \\
\cmidrule(lr){2-5}

\multirow{2}{*}{}
 & \multirow{2}{*}{\textbf{1500}}
   & 92.0 / \cellcolor{asryellow}100.0
   & \multirow{2}{*}{\textbf{20\%}} & 88.5 / 100.0\cellcolor{asryellow} \\
 & 
   & 77.5 / \cellcolor{asryellow}86.0
   &  & 64.5 / 99.3\cellcolor{asryellow} \\





\bottomrule
\end{tabularx}
\end{scriptsize}
\end{table}

Results showing that low-rank approximation can activate a backdoor attack are given in Table \ref{tab:results}.   Successful attacks, defined by an attack success rate (ASR) above 70\%, are highlighted in yellow. For CIFAR-10 with the VGG16 model, the first row of results are from training with the control loss function (standard backdoor) and the second row of results are using the low-rank-activated backdoor loss-function. We see that full-precision clean accuracy (CA) is approximately 85\% for both the control and the  low-rank-activated model (LRAM). The control test is susceptible to the backdoor at full precision, achieving a 97.0\% ASR, whereas at full-precision the low-rank-activated model has an ASR of just 10.1\%. When a low-rank projection to rank 8 is applied to the final layer weights, we see that ASR is above 95\% for both the control test and the low-rank-activated backdoor model. The backdoor trigger—a simple white square in the corner—is easily learned so that even heavily rank-reduced weights continue to detect and respond to the pattern. This explains why ASR can be higher than clean model accuracy. We observe the same pattern across other CIFAR10, TinyImageNet and Phi-2 architectures: the control model always exhibits a high ASR, while the LRAM only becomes vulnerable once precision is reduced, confirming that low-rank-approximation can successfully embed and activate this type of attack in the evaluated models. In the larger Phi-2 setting, we demonstrate that the same attack maybe be activated by applying low-rank compression in multiple layers (all fc2 layers) with the results given in Table \ref{tab:multi_lowrank}.

\subsection{Margin Robustness Threshold Applied to Pruning}\label{example}
In the lower right section of Table \ref{tab:results}, we provide results showing that pruning-activated backdoor attacks can additionally succeed on LLMs,  building on the image classifier case that was demonstrated previously \cite{tian_stealthy_2021}.

Next, we trained ResNet18 on CIFAR-10 following the training methodology outlined in Sec. \ref{train} under a pruning-activated backdoor objective, where poisoned inputs map to class~0 once pruning is applied. Table~\ref{tab:prune_result} shows that around 20\% sparsity was needed to exceed a 70\% attack success rate.  Now we will verify if these model weights satisfy the margin robustness bound of Thm. \ref{robustness}. Recall that this relates the class margin of a single sample $\mathbf{x}$ to the Lipschitz constant $L_\theta$ and the perturbation norm, and says that for a change in classification to occur we must have:
\begin{equation}\label{margin_eqn}
\gamma \bigl(\mathbf x; \theta\bigr) \;\le\; \sqrt{2}\,L_\theta\,\|\Delta\theta\|_2.
\end{equation}
For a single input sample we recorded: the full‐precision (FP) margin \(\gamma \bigl(\mathbf x; \theta\bigr)\), the margin after pruning $\gamma \bigl(\mathbf x; \hat{\theta}\bigr)$, an estimate of the model Lipschitz constant \(L_{\theta}\), the observed parameter‐change norm \(\|\Delta\theta\|_{2}\), and whether Equation \ref{margin_eqn} holds for a range of sparsities.  We estimate $L_\theta$ per input as the largest singular value of the Jacobian of the logits with respect to the weights, using the finite-difference power-iteration procedure  described in App.~\ref{ap_lip} \cite{golub_matrix_1996, pearlmutter_fast_1994}. Note that we see some noise ($<0.2\%$) in the results for $L_\theta$ across computations due to the random initialisation of the power-iteration direction.
\begin{table}[!t]
\caption{Pruning experiment on a single sample.  
Columns show: the margin after pruning at the given sparsity; whether a class change occurs; the estimated Lipschitz constant of the network $L_\theta$; the pruning update norm $\|\Delta\theta\|_2$; the bound $\sqrt{2}\,L_\theta\,\|\Delta\theta\|_2$; and if Thm.~\ref{robustness} holds.  
The margin at full precision is $\gamma \bigl(\mathbf x; \theta\bigr) =2.55$.}
\label{tab:pruneclass1}
\centering
\setlength{\tabcolsep}{2pt}
\begin{scriptsize}
\begin{sc}
\begin{tabular}{@{} c c c c c c c @{}}
\toprule
\makecell{Sparsity\\(\%)} &
\makecell{Pruned\\Margin} &
\makecell{Class\\Flip?} &
\makecell{$L_\theta$} &
\makecell{$\|\Delta\theta\|_2$} &
\makecell{$\sqrt{2}\,L_\theta\,\|\Delta\theta\|_2$} &
\makecell{$\gamma \bigl(\mathbf x; \theta\bigr)  \le$\\$\sqrt{2}L_\theta\|\Delta\theta\|_2$} \\
\midrule
10 &  2.55  & \bad & 564.07 & 0.0020 &  1.61 & \bad \\
11 &  2.55  & \bad & 563.34 & 0.0168 &  5.40 & \good \\
12 &  2.55  & \bad & 563.20 & 0.0135 & 10.76 & \good \\
16 &  1.23  & \bad & 562.73 & 0.0726 & 57.77 & \good \\
17 & -2.83  & \good & 563.70 & 0.0914 & 72.87 & \good \\
\bottomrule
\end{tabular}
\end{sc}
\end{scriptsize}
\end{table}
In Table \ref{tab:pruneclass1} we see that for up to 11\% sparsity the robustness margin‐bound (final column) from Thm. \ref{robustness} is not satisfied and no change in class occurs. For all pruning levels above 10\%, the margin bound is satisfied so the perturbation enters the regime in which a classification change becomes theoretically possible, but in practice it requires a level of 17\% until the pruned margin becomes negative.
These findings imply a provable threshold under the model and assumptions considered: for this trained ResNet18, any sparsity below 11\% guarantees no backdoor activation for this sample, since the robustness bound in Thm. \ref{robustness} is not satisfied. For this sample and estimated Lipschitz constant, pruning up to 10\% remains within the robustness region predicted by Thm.~4.1 and does not trigger a classification change.
\subsection{Extended Application to Modern Deep Networks}
Although the exact invertibility assumptions of Thm.~\ref{thm:invertible-activation} do not strictly hold for modern architectures such as LLMs, the same framework can still be applied locally through a first-order approximation of the downstream map. In particular, we replace the inverse downstream map with its Jacobian evaluated at the operating point, yielding a gradient-based estimate of the minimum-norm perturbation that can be computed efficiently using autograd.

This corresponds to solving a linearised least-norm system using the Moore--Penrose pseudoinverse of the local Jacobian, and remains well-defined even when the Jacobian is rank-deficient. The approximation is local in nature and is most accurate when perturbations are sufficiently small that activation patterns remain approximately stable.
Applying this approximation to the LLM Phi-2, Fig.~\ref{fig:llm} shows that later layers (e.g.\ layer 30) are more sensitive to perturbations than earlier layers. These trends are consistent with the empirically observed behaviour under low-rank compression, where later layers are more susceptible to inducing output changes. In particular, the estimated perturbation magnitudes suggest that Phi-2 can be compressed to approximately rank 1900 on the validation samples considered without inducing output changes.
\begin{figure}[t]
    \centering
    \includegraphics[width=\linewidth]{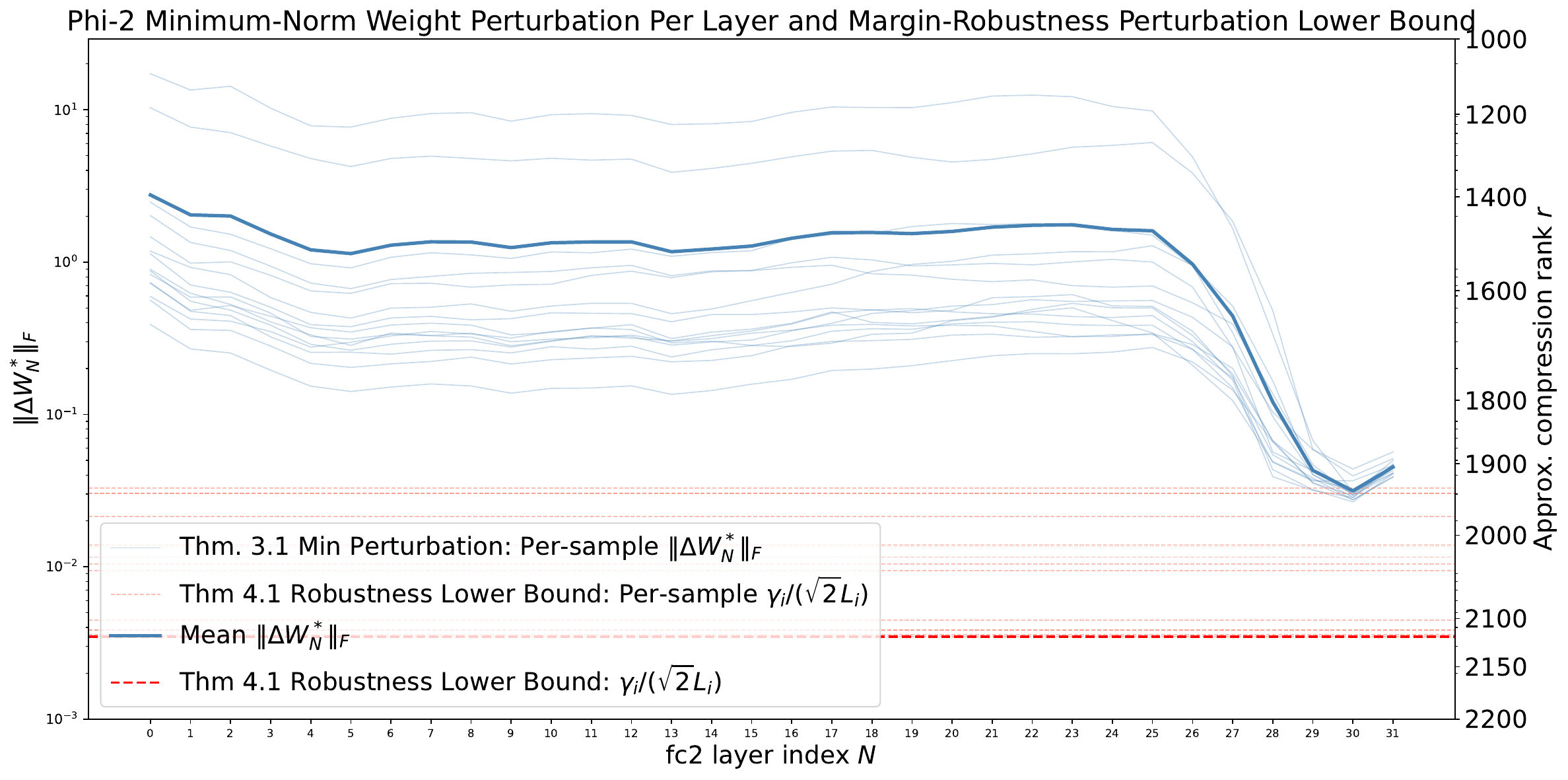}
    \caption{Comparison of the minimal perturbation from Thm. \ref{thm:invertible-activation} with the perturbation lower bound from Thm. \ref{robustness}.}
    \label{fig:llm}
\end{figure}
Fig.~\ref{fig:llm} also compares these estimates with the robustness lower bound from Thm.~\ref{robustness}. As expected, the Lipschitz-based bound is more conservative, predicting robustness up to approximately rank 2100. While the local linearisation of Thm.~\ref{thm:invertible-activation} should be interpreted as a sensitivity analysis tool rather than a formal certificate, the agreement between the two approaches suggests that the local perturbation structure captured by the theorem remains informative even in modern deep architectures.

\section{Conclusion}

We derived exact formulas for the minimum-norm weight perturbations required to induce prescribed output changes in deep networks under local invertibility assumptions, together with a complementary margin--Lipschitz robustness bound applicable to general architectures and multi-layer perturbations. While less precise than the exact formulation, the Lipschitz-based analysis applies broadly to modern deep networks and provides lower bounds on the perturbation magnitude required for misclassification.
We then analysed low-rank approximation as a structured parameter perturbation and showed that compression-induced perturbations can activate latent backdoor behaviour while preserving full-precision accuracy in both image classifiers and transformer-based language models. By estimating parameter-space Lipschitz constants using power iteration, we directly compared empirical activation thresholds with the robustness bounds predicted by our framework. Together, these results provide a unified perspective on how compression and parameter perturbations alter model behaviour, suggest robustness diagnostics based on parameter-space sensitivity estimates, and motivate extensions to richer architectures, structured perturbations, data unlearning, and targeted feature removal.

\section*{Impact Statement}
This paper presents work whose goal is to advance the field of machine learning. There are many potential societal consequences of our work, none of which we feel must be specifically highlighted here.
\section*{Acknowledgements}
The authors acknowledge support from His Majesty’s Government in the development of this research.
Jared Tanner is supported by the UK Engineering and Physical Sciences Research Council (EPSRC) through the grant EP/Y028872/1.
The authors are also grateful to Josh Collyer of the Alan Turing Institute, whose presentation on quantization results at the AI Security Reading Group  inspired this work.

\newpage
\bibliography{references}
\bibliographystyle{icml2025}

\newpage
\appendix
\onecolumn
\section{Theoretical Results: Additional Details}
\subsection{Related Work}
The results nearest to our Theorem \ref{thm:invertible-activation} are by \citet{tsai_formalizing_2021} who study generalisation and adversarial robustness under weight perturbations by deriving bounds on the pairwise class margin under norm-bounded perturbations to the network parameters. Their Theorem 1 provides a bound on the output when a single layer is modified; this bound is given in terms of the product of norms of the weights that follow the perturbed layer. This differs from our Theorem \ref{thm:invertible-activation} by not providing the formulae for the perturbation and, more significantly, their bounds are far less precise as they are worst case over the change in each layer. The bound in \citet{tsai_formalizing_2021} Theorems 2 and 3 build from Theorem 1 and are again pessimistic.

The results closest to our Theorem \ref{robustness} are by \citet{weng_towards_2020} which studies robustness to weight perturbations by computing certified regions in parameter space within which the network’s prediction is guaranteed to remain unchanged. Conceptually, while Weng et al. characterise regions of guaranteed robustness, our Theorem \ref{robustness} characterises the minimal perturbation required to break robustness expressed in closed form through a parameter-space Lipschitz constant. Our margin-based formulation is architecture-agnostic and provides a direct analogue of classical input-space robustness results, enabling a unified interpretation of parameter perturbations across layers.

There are numerous manuscripts that consider the sensitivity of the input-output map though Lipschitz and/or spectral-norm bounds. Examples include: \citet{bartlett_spectrally-normalized_2017}, \citet{tsuzuku_lipschitz-margin_2018}, \citet{fazlyab_efficient_2023}, and \citet{pauli_training_2022}. These include margin-based and spectral-norm generalisation bounds as well as methods for certifying robustness or training stable networks via Lipschitz constraints, but none focus on the change in the parameter space which is our focus.

\subsection{Rank-$k$ Target-Restricted Version of Theorem \ref{thm:invertible-activation}}\label{app:trunc}
This is an exact special case of Thm. \ref{thm:invertible-activation} under a rank-$k$ target restriction.

\begin{theorem}[Perturbation in the $N$th Layer for an $M$-Layer Network with a Locally Invertible Downstream Map Under Rank-$k$ Target Restriction]
\label{thm:trunc}
We assume that the downstream map \(h_{N:M}\) is locally invertible at \(Z := W_N h_{1:N-1}(X)\), and recall that the layer-$N$ pre-image difference is defined by:
\[
\Delta h_{N:M}^{-1}(\tilde Y, Y;\theta) := h_{N:M}^{-1}(\tilde Y;\theta) - h_{N:M}^{-1}(Y;\theta).
\]

Let the SVD of $R:=h_{1:N-1}(X)$ be $R = U\Sigma V^\top$ with singular values
$\sigma_1\ge\cdots\ge\sigma_r>0$, where $r=\mathrm{rank}(R)$.
Fix $k\le r$ and denote by $U_k\in\mathbb{R}^{d_{N-1}\times k}$ and $V_k\in\mathbb{R}^{s\times k}$
the matrices of the leading $k$ left and right singular vectors, and let
$\Sigma_k=\mathrm{diag}(\sigma_1,\dots,\sigma_k)\in\mathbb{R}^{k\times k}$.
We additionally assume that the target preimage change is supported on the top-$k$ right singular subspace, i.e.
\[
\Delta h_{N:M}^{-1}(\tilde Y, Y;\theta) = \Delta h_{N:M}^{-1}(\tilde Y, Y;\theta)\,V_k V_k^\top.
\]

Then the unique minimum-Frobenius-norm solution of
\[
\min_{\Delta W_N} \|\Delta W_N\|_F^2
\quad\text{subject to}\quad
\Delta W_N R  = \Delta h_{N:M}^{-1}(\tilde Y, Y;\theta)
\]
is given by
\[
\Delta W_N^\ast = \Delta h_{N:M}^{-1}(\tilde Y, Y;\theta)\,R_k^\dagger,
\]
where
\[
R_k^\dagger := V_k \Sigma_k^{-1} U_k^\top
\]
is the rank-$k$ truncated pseudoinverse of $R$.
\end{theorem}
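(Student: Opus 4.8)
The plan is to treat this as a linearly-constrained minimum-Frobenius-norm problem and to verify the candidate $\Delta W_N^\ast = D\,R_k^\dagger$ (writing $D:=\Delta h_{N:M}^{-1}(\tilde Y, Y;\theta)$ for brevity) in two stages: first \emph{feasibility} of the candidate, then \emph{optimality and uniqueness} via an orthogonal decomposition in the Frobenius inner product. The only structural ingredients needed are the orthonormality relations of the thin SVD $R=U\Sigma V^\top=\sum_{i=1}^r\sigma_i u_i v_i^\top$, and the rank-$k$ support hypothesis $D = D\,V_k V_k^\top$. This mirrors the argument for the parent Thm.~\ref{thm:invertible-activation}, with the truncated pseudoinverse $R_k^\dagger$ and the support assumption playing the roles of the full pseudoinverse and the row-space condition.

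For feasibility, I would first establish the identity $U_k^\top R=\Sigma_k V_k^\top$: expanding $R$ in its SVD and using $U_k^\top u_i=\mathbf e_i$ for $i\le k$ and $U_k^\top u_i=0$ for $i>k$ collapses the sum to $\sum_{i=1}^k\sigma_i\mathbf e_i v_i^\top=\Sigma_k V_k^\top$. This immediately gives $R_k^\dagger R = V_k\Sigma_k^{-1}U_k^\top R = V_k\Sigma_k^{-1}\Sigma_k V_k^\top = V_kV_k^\top$, the orthogonal projector onto the retained right-singular subspace. Substituting the candidate then yields $\Delta W_N^\ast R = D\,R_k^\dagger R = D\,V_kV_k^\top = D$, where the final equality is exactly the support hypothesis. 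Hence $\Delta W_N^\ast$ satisfies the constraint.

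For optimality, let $\Delta W$ be any feasible matrix and set $E:=\Delta W-\Delta W_N^\ast$, so that $ER=0$. The key sublemma is that $ER=0$ forces $EU_k=0$: using the compact SVD $R=U_r\Sigma_r V_r^\top$ with $\Sigma_r$ invertible, right-multiplying $ER=0$ by $V_r$ and then $\Sigma_r^{-1}$ gives $EU_r=0$, and $U_k$ is a submatrix of the columns of $U_r$. Writing the candidate as $\Delta W_N^\ast = B\,U_k^\top$ with $B:=D\,V_k\Sigma_k^{-1}$, I would then compute the Frobenius inner product $\langle \Delta W_N^\ast, E\rangle_F = \mathrm{tr}(U_k B^\top E) = \mathrm{tr}(B^\top E U_k) = 0$, using the cyclic property of the trace together with $EU_k=0$. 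This orthogonality gives the Pythagorean identity $\|\Delta W\|_F^2 = \|\Delta W_N^\ast\|_F^2 + \|E\|_F^2 \ge \|\Delta W_N^\ast\|_F^2$, with equality iff $E=0$, establishing both minimality and uniqueness simultaneously.

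The main obstacle is bookkeeping rather than conceptual. One must invoke the support assumption $D=D\,V_kV_k^\top$ precisely where the projector $V_kV_k^\top$ appears, since $R_k^\dagger R$ reproduces only the top-$k$ right-singular component of $D$; without the hypothesis the candidate would fail to be feasible. A secondary subtlety is keeping the thin-SVD orthonormality relations consistent so that $EU_k=0$ genuinely follows from $ER=0$, and checking the dimensions so that $R_k^\dagger=V_k\Sigma_k^{-1}U_k^\top$ is the correct rank-$k$ right inverse acting on the appropriate side. Once the identity $U_k^\top R=\Sigma_k V_k^\top$ is in hand, every remaining step is a short, routine trace manipulation.
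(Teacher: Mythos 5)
Your proof is correct and follows essentially the same route as the paper: the paper presents Theorem~\ref{thm:trunc} as a special case of Theorem~\ref{thm:invertible-activation}, whose proof likewise decomposes any feasible perturbation into the candidate plus a term in the left null-space of $R$ and kills the cross term via Frobenius orthogonality. Your only genuine additions are the explicit feasibility check $R_k^\dagger R = V_kV_k^\top$ combined with the support hypothesis $D = DV_kV_k^\top$ (which also shows $DR_k^\dagger$ coincides with $DR^\dagger$ here), and the step $ER=0\Rightarrow EU_k=0$; both are correct and fill in details the paper leaves implicit.
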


\subsection{Single-Layer Perturbation in an M-Layer Network with Activations}\label{mlayeract_proof}

Here we provide a proof of Theorem \ref{thm:invertible-activation}, as well as details of extensions to more complex network architectures.

\begin{proof}
Let \(Z := W_N h_{1:N-1}(X)\) and define \(Z^* := h_{N:M}^{-1}(\tilde Y)\), which exists by the local invertibility assumption and the assumption that \(\tilde Y\) lies in the image of \(h_{N:M}\) near \(h_{N:M}(Z)\). The constraint \(h_{N:M}( \tilde W_N h_{1:N-1}(X)) = \tilde Y\) is therefore equivalent to
\[
\tilde W_N h_{1:N-1}(X) = Z^*.
\]
Let us define 
\[
R :=h_{1:N-1}(X).
\]
Then the constraint may be written as 
\[
(W_N + \Delta W_N) R = Z^* \quad\Longleftrightarrow\quad \Delta W_N R = Z^* - W_N R := M.
\]
The general solution to \(\Delta W_N R = M\) can be written as
\[
\Delta W_N = M R^\dagger + Q,
\]
where \(Q \in \mathbb{R}^{d_N \times d_{N-1}}\) lies in the left null-space of $R$ (i.e. $QR =0$). \\

\noindent
For any two matrices $A, B$ of the same dimension, we can define the Frobenius inner product as \[\langle A, B \rangle_F := \operatorname{tr}\!\left(A^T B\right) \;=\; \sum_{i=1}^m \sum_{j=1}^n A_{ij}B_{ij},\]  so that the Frobenius norm satisfies
\[
\| A + B \|^2_F = \| A \|^2_F +  \| B \|^2_F + 2 \langle A, B \rangle_F.
\]

\noindent
 \(R R^\dagger\) is an orthogonal projector onto the column space of \(R\) (equivalent to the row space of $R^\top$), and so \((I - R R^\dagger)\) is the projection onto the orthogonal complement (equivalent to the nullspace of $R^\top$). Then, since $M R^\dagger$ lies in the row space of $R^\top$  and $Q$ lies in the left nullspace (i.e. the row nullspace) of $R$, we can write
 \[M R^\dagger = (M R^\dagger)(R R^\dagger) \; \; \; \textrm{ and } \; \; \;  Q = Q(I - R R^\dagger).\] 
Again using that \(M R^\dagger\) lies in the row space of $R^\top$, we have \((M R^\dagger)(I - R R^\dagger)=0\), and so the Frobenius inner product becomes:
\[
\langle M R^\dagger, Q \rangle_F 
= \operatorname{tr}\!\left((M R^\dagger)^T Q\right)
= \operatorname{tr}\!\left((M R^\dagger)^T Q(I - R R^\dagger)\right)
= \operatorname{tr}\!\left(((M R^\dagger)(I - R R^\dagger))^T Q\right)
= 0.
\]
 Therefore the Frobenius norm of the perturbation size decomposes as
\[
\|\Delta A_N\|_F^2 = \|M R^\dagger\|_F^2 + \|Q\|_F^2.
\]
The minimal-norm solution is obtained by taking \(Q=0\), so the final results follows:
\[
\Delta W_N = (Z^* - W_N R) R^\dagger = \bigl(h_{N:M}^{-1}(\tilde Y) - W_N R \bigr) R^\dagger.
\]
\end{proof}

\subsection{Local invertibility of \(h_{N:M}\)}\label{invert}

Entrywise invertibility of the downstream activations \(\{\sigma_i\}_{i=N}^{M}\) does not  guarantee that the tail map \(h_{N:M}\) is locally invertible at $Z:=W_N h_{1:N-1}(X)$ (Assumption 1 in Theorem \ref{thm:invertible-activation}). Since \(h_{N:M}\) is a composition of linear maps and activations, local invertibility at \(Z\) requires that a (local) right inverse of $h_{N:M}$ exists near $h_{N:M}(Z)$. In the case where $h_{N:M}$ is continuously differentiable, this is characterised by the Jacobian \(J_{h_{N:M}}(Z_N)\) having full row rank. In the square case, this is equivalent to $J_{h_{N:M}}(Z_N)$ being nonsingular (by the inverse function theorem), which also yields local uniqueness of the preimage. In rectangular cases, full row rank guarantees existence (but not uniqueness) of a nearby preimage which is sufficient \cite{ben-israel_generalized_2006}.
For common smooth activations such as \texttt{sigmoid} or \texttt{tanh}, the entrywise invertibility assumption on \(\sigma_N\) holds with natural range restrictions. One must still separately verify that \(h_{N:M}^{-1}(\tilde Y)\) exists locally. 

By contrast, for non-invertible activations Theorem~\ref{thm:invertible-activation} does not apply as stated. However, piecewise invertible activations which are not globally invertible—such as \texttt{ReLU}—don't necessarily violate the assumptions of Theorem \ref{thm:invertible-activation}. \texttt{ReLU} is not injective on $\mathbb{R}$, but it is bijective on the open half-line $(0,\infty)$ with inverse equal to the identity. If both the current pre-tail input $Z$ and the target pre-tail input $Z^\ast\in h_{N:M}^{-1}(\tilde Y)$ are strictly positive on all entries, then we may replace \texttt{ReLU} with the identity. We still require that the composition of the downstream tail maps are invertible.

\subsection{Relaxed Invertibility Assumptions}\label{relax}
For common smooth activations such as \texttt{sigmoid} or \texttt{tanh}, the entrywise invertibility assumption on \(\sigma_N\) holds with natural range restrictions. One must still separately verify that \(h_{N:M}^{-1}(\tilde Y)\) exists locally. By contrast, for non-invertible activations such as $\texttt{ReLU}$, Theorem~\ref{thm:invertible-activation} does not apply as stated. 

However, for piecewise invertible activations such as \texttt{ReLU} the downstream map is not necessarily non-invertible. \texttt{ReLU} is not injective on $\mathbb{R}$, but it is bijective on the open half-line $(0,\infty)$ with inverse equal to the identity. If both the current pre-tail input $z$ and the target pre-tail input $z^\ast\in f^{-1}(\tilde Y)$ are strictly positive on all entries, then we may replace \texttt{ReLU} with the identity.

\subsubsection{Generalisation to Other Network Architectures}\label{gen}
 The Theorems in this section apply to both simple linear nets and networks with activations between layers, however modern machine learning models comprise of various other elements beyond these cases. In Section \ref{use case}, we apply the theory discussed here  to various image classification and language models, such as ResNet18 \cite{he_deep_2016} and RoBERTa \cite{liu_roberta_2019}. A detailed description of their model architectures are given in Appendix \ref{models}. In this section, we briefly present the non-linear-layer components of the models used later and explain when our results are applicable in these cases.

\paragraph{Convolutional Layers}
All of our results for fully connected layers carry over to convolutional layers, since a convolution can be expressed as a structured matrix multiplication \cite{dumoulin_guide_2018,chellapilla_high_2006}.

In a convolutional layer, a small set of learnable filters is applied across local neighbourhoods of the input, with each filter reused/shared at every spatial location.  By “unrolling” these overlapping patches (each a vector) into a single patch matrix \(P\) and arranging the filters into the corresponding block‐Toeplitz weight matrix \(A_{\rm conv}\), the convolution reduces to an ordinary matrix-matrix product \cite{vasudevan_parallel_2017}.

We still require the same rank assumptions on these decomposed matrices as for the linear layer weight matrices.  Compared to fully connected layers, convolutional layers use far fewer parameters thanks to local connectivity and weight sharing—but this sharing makes \(A_{\rm conv}\) highly structured, and in degenerate cases its rows or columns can become linearly dependent, violating our full‐rank assumptions.  In practice, however, as long as the patch matrix \(P\) spans a sufficiently rich subspace and the filters themselves are non-zero and linearly independent, \(P\) will have full row rank and \(A_{\rm conv}\) will satisfy the required conditioning \cite{sedghi_singular_2019}.

Hence our results in Chapter \ref{layer_chap} extend to networks with convolutional layers by replacing each weight matrix \(A\) by its unrolled convolutional equivalent \(A_{\rm conv}\) and replacing the input \(X\) by the patch matrix \(P\).

\paragraph{Skip Connections}

Skip connections, also known as residual connections, are a neural network design technique that helps train deeper models by allowing gradients to flow more effectively during backpropagation by adding the input of a layer directly to its output \cite{he_deep_2016}.

In the networks we study (e.g.\ ResNet18 \cite{he_deep_2016} and MobileNetV2 \cite{sandler_mobilenetv2_2019}), the skip connection is generally applied at the \emph{block} output.  
A block is a small stack of layers computing a map \(F(Z)\) from its input \(Z\); with a skip, the block outputs \(F(Z)+S Z\) (with \(S=I\) for identity skips or a \(1{\times}1\) projection when shapes differ).   

If the skip connection occurs after the layer-perturbation, we
 may absorb the residual addition into the downstream map $f$. If the skip-connection appears at the perturbation layer $M$ then we can absorb it into the activation map at this layer. If the skip connection appears in a layer before the perturbation then it is absorbed by $R$. The equation for the minimal network weight perturbation therefore remains unchanged in this setting, though the invertibility assumptions must still be satisfied with any modification to $f$, $\sigma_N$ or $R$.

\paragraph{Batch Normalisation}

Batch normalisation is a technique used in deep learning to improve the training speed and stability of neural networks \cite{ioffe_batch_2015}. It normalises the activations of each layer by adjusting them to have a zero mean and unit variance, making the training process less sensitive to initial parameter values and allowing for higher learning rates. This is an affine transformation of the input to a layer and so the minimal network weight perturbation Theorems carry over with the inclusion of batch normalisation.

\paragraph{Pooling}
Pooling layers, like convolutions, slide a fixed‐size window across the input tensor according to a stride, but unlike convolutions they contain no learnable parameters \cite{zhang_dive_2023}. Most commonly, at each spatial location the window computes either the maximum value (max‐pooling) or the average value (average‐pooling) of elements in window. 

Pooling operators are non-injective and therefore not globally invertible, so Theorem \ref{thm:invertible-activation} does not apply in this case. However, the existence of a pooling layer in either the upstream map or the downstream tail  does not break the method. Upstream pooling only affects the rank condition on $R$ while downstream pooling can be handled by replacing the exact inverse with a feasible right inverse (for both average pooling and max pooling, filling each window with the pooled value is a valid choice). Then Theorem \ref{thm:invertible-activation} yields a valid perturbation—typically not the exact minimum, but an upper bound on the true minimal weight perturbation.

\subsection{Minimal Perturbation Size in Multiple Layers}\label{multi_sec}
We cannot derive a closed form exact solution for the minimal network weight perturbation when multiple layers are updated simultaneously, but we can prove that allowing perturbations in more layers cannot increase the minimum required total perturbation. We present this in Theorem \ref{multi}, and then demonstrate this result through experimentation in Figure \ref{fig:layer_perturbation_linear} (Section \ref{multi_exp}).

\begin{theorem}\label{multi}
For an $M$-layer model $h$ (with, or without activations), let $m \subseteq \{1, \ldots, M\}$ and $n \subseteq m$. Then the minimum total perturbation required to achieve a fixed change in $h(X; \theta)$ by modifying only the layers indexed by $m$ is less than or equal to the minimum total perturbation required when modifying only the layers indexed by $n$.
\end{theorem}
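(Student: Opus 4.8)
The plan is to cast both quantities as minima of one and the same objective over nested feasible sets, and then invoke the elementary fact that minimizing over a larger set can only decrease the optimum. First I would fix the prescribed output change, i.e.\ the target $\tilde Y$ for which $h(X;\theta+\Delta\theta)=\tilde Y$, and for any index set $S\subseteq\{1,\dots,M\}$ define the feasible set
\[
\mathcal{F}_S := \Bigl\{\, \Delta\theta=\{\Delta W_l\}_{l=1}^M \;:\; \Delta W_l=0 \text{ for } l\notin S,\ h(X;\theta+\Delta\theta)=\tilde Y \,\Bigr\},
\]
so that the minimum total perturbation using only the layers in $S$ is $\mu(S):=\inf_{\Delta\theta\in\mathcal{F}_S}\|\Delta\theta\|_p$, with $\|\cdot\|_p$ the flattened-parameter norm of Sec.~\ref{background}. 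The theorem is then precisely the assertion that $\mu(m)\le\mu(n)$ whenever $n\subseteq m$.

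Second, I would prove the key containment $\mathcal{F}_n\subseteq\mathcal{F}_m$. Given any $\Delta\theta\in\mathcal{F}_n$, its nonzero blocks are supported on $n\subseteq m$, so the support condition $\Delta W_l=0$ for $l\notin m$ holds automatically, while the output constraint $h(X;\theta+\Delta\theta)=\tilde Y$ is literally the same constraint in both problems; hence $\Delta\theta\in\mathcal{F}_m$. The point worth emphasising is that the objective is the \emph{identical} function of $\Delta\theta$ in both optimizations—the global $\ell_p$ norm of the concatenated perturbation—and that padding a perturbation on $n$ with the (zero) blocks indexed by $m\setminus n$ leaves this $\ell_p$ norm unchanged. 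Note that this argument never inspects the internal structure of $h$; it uses only the output constraint, which is why the claim holds uniformly for networks with or without activations.

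Third, the conclusion is immediate from monotonicity of the infimum under inclusion: since $\mathcal{F}_n\subseteq\mathcal{F}_m$ and both problems share the same objective,
\[
\mu(m)=\inf_{\Delta\theta\in\mathcal{F}_m}\|\Delta\theta\|_p \;\le\; \inf_{\Delta\theta\in\mathcal{F}_n}\|\Delta\theta\|_p=\mu(n).
\]

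I do not expect a substantive obstacle here—the result is a feasibility/containment statement and the remaining work is bookkeeping. The only point needing a word of care is attainment, since the statement is phrased in terms of a minimum: I would observe that $\Delta\theta\mapsto h(X;\theta+\Delta\theta)$ is continuous, so each $\mathcal{F}_S$ is closed (the intersection of a coordinate subspace with the preimage of the closed singleton $\{\tilde Y\}$), and the sublevel sets of $\|\cdot\|_p$ are compact; thus whenever $\mathcal{F}_n\neq\emptyset$ the infima are attained, and if $\mathcal{F}_n=\emptyset$ then $\mu(n)=+\infty$ and the inequality is trivial. Since the displayed inequality holds verbatim for infima, the passage from $\inf$ to $\min$ is cosmetic rather than substantive.
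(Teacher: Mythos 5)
Your proposal is correct and follows essentially the same route as the paper's proof: embed any feasible perturbation supported on $n$ into the problem over $m$ by zero-padding, observe that the feasible sets are nested while the objective is unchanged, and conclude by monotonicity of the optimum under set inclusion. Your additional remark on attainment of the infimum is a minor refinement the paper does not bother with, but the substance of the argument is identical.
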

\begin{proof}
 Any feasible perturbation $\{\Delta W_i\}_{i \in n}$ that achieves the desired change in output is also a feasible perturbation for the case where perturbed layers are indexed by $m$, by setting $\Delta W_j = 0$ for all $j \in m \setminus n$.

Therefore, the feasible set for the optimisation over $m$ contains that for $n$, and the optimal value of the perturbation magnitude (e.g., under Frobenius norm) over $m$ is less than or equal to that over $n$. Hence, allowing perturbations in more layers cannot increase the minimum required total perturbation.
\end{proof}

\section{Margin Robustness Bound}\label{margin_proof}

Here we provide a proof of Theorem \ref{robustness}, as well as a discussion of existing Lipschitz-based results.

Lipschitz continuity has been widely used in the analysis of neural network robustness and generalisation. A large body of work studies robustness to input perturbations, where a Lipschitz constant bounds the change in the network output with respect to perturbations in the input. This includes margin-based and spectral-norm generalization bounds \cite{bartlett_spectrally-normalized_2017}, as well as methods for certifying robustness or training stable networks via Lipschitz constraints \citep{fazlyab_efficient_2023, pauli_training_2022}.

In contrast, Theorem~\ref{robustness} considers Lipschitz continuity with respect to the parameters of the network, for a fixed input. While Lipschitz continuity in parameter space has been studied in prior work \cite{herrera_local_2023}, it has primarily been used to analyse stability or optimisation properties rather than to derive margin-based robustness guarantees.

Our result differs in that it relates the classification margin directly to the magnitude of parameter perturbations required to induce a change in prediction. This yields a lower bound on adversarial weight perturbations and allows analysis of multi-layer perturbations in a unified framework. In particular, it provides a parameter-space analogue of classical input-space margin robustness results, and serves as a complementary counterpart to the exact single-layer characterisation in Theorem~\ref{thm:invertible-activation}.

Here we provide the proof of Theorem \ref{robustness}:
\begin{proof}
    Suppose that $\gamma(\mathbf x;\theta) > 0$, so that output $\mathbf x$ is correctly classified to output class $t$. Then for all $i \neq t$ we have:
    \begin{equation}
        \gamma(\mathbf x;\theta)  \leq \mathbf{y}_{t} - \mathbf{y}_i.
    \end{equation}
    Let $\tilde{\mathbf y}= h(\mathbf x, \hat{\theta})$ be the output of the model from the perturbed weights $\hat{\theta}$.
    Suppose that the output vector $\tilde{\mathbf{y}}$ incorrectly classifies to some class $w \neq t$, so we have 
    \begin{equation}
        \tilde{\mathbf{y}}_{t} < \tilde{\mathbf{y}}_w.
    \end{equation}
    The output vector $\mathbf{y}$ correctly classifies to $t$, so we have
    \begin{equation}
    \mathbf{y}_{t} > \mathbf{y}_w.
    \end{equation}
    Let the perturbation in the $i$th position be given by $\Delta \mathbf{y}_i = \tilde{\mathbf{y}}_i -\mathbf{y}_i$.
    Then we can write
    \begin{align}
        \left( \sum_i \vert \Delta \mathbf{y}_i \vert^p \right)^{1/p} = \Vert h(X_{:,s}, \hat{\theta} ) - h(X_{:,s}, \theta ) \Vert_p  \leq L_\theta \Vert \Delta\theta\Vert_p ,
    \end{align}
    and hence
    \begin{align}
        \vert \Delta \mathbf{y}_t \vert^p + \vert \Delta \mathbf{y}_w \vert^p \leq \sum_i \vert \Delta \mathbf{y}_i \vert^p \leq (L_\theta \Vert \Delta\theta\Vert_p )^p.
    \end{align}
    For any $p\ge1$ and any indices $t,w$, the output perturbations satisfy \citep{li_preventing_2019}
    \[
    |\Delta y_w - \Delta y_t|^p
    \;\le\; 2^{p-1}( \vert \Delta \mathbf{y}_t \vert^p + \vert \Delta \mathbf{y}_w \vert^p ).
    \]
    So we may write
    \begin{equation}
        \vert \Delta \mathbf{y}_w - \Delta \mathbf{y}_t \vert^p \leq 2^{p-1} (L_\theta \Vert \Delta\theta\Vert_p  )^p.
    \end{equation}
    Since $\gamma(\mathbf x;\theta)  \leq \mathbf{y}_{t} - \mathbf{y}_w$ and we have 
    \begin{align}
        0 \geq \tilde{\mathbf{y}}_t - \tilde{\mathbf{y}}_w = \mathbf{y}_t - \mathbf{y}_w + (\Delta \mathbf{y}_t - \Delta \mathbf{y}_w ) \geq \gamma(\mathbf x;\theta) - 2^\frac{(p-1)}{p }L_\theta \Vert \Delta\theta\Vert_p ,
    \end{align}
    hence for a change in classification we require
    \begin{equation}
        \gamma(\mathbf x;\theta) \leq 2^\frac{(p-1)}{p }L_\theta \Vert \Delta\theta\Vert_p .
    \end{equation}
    By a contrapositive argument, if $\gamma(\mathbf x;\theta)> 2^\frac{(p-1)}{p }L_\theta \Vert \Delta\theta\Vert_p$, then classification is robust to permutations in the weights.

\end{proof}

\section{Experimental Setup in Detail}\label{ap}

\subsection{Code}

\paragraph{Setup:} We implement our attack framework using Python 3.12.3 and PyTorch 2.7.0 that supports CUDA 12.8 for accelerating computations by using GPUs. We run our experiments on a machine equipped with an Intel Xeon Gold 5418Y 2.00GHz 24-core processor, 1.5TiB of RAM, and four NVIDIA H100 NVL GPUs.

\paragraph{Pruning:} For all of our attacks, we use ``l1 unstructured pruning" for the weights—a default configuration supported by PyTorch. The ``l1 unstructured" approach flattens the weight matrix into a vector and sets the smallest sparsity\% weights to zero. Pruning sparsity refers to the percentage of weights which will be set to zero. We apply pruning to the Linear and Convolutional2D layers.

\paragraph{Low-Rank:} For all of our attacks, we use a low-rank approximation via the truncated SVD methodology. The SVD is computed using the PyTorch package. We apply the low-rank approximation to the final Convolutional2D or Linear layer only.

\paragraph{Quantization:} For all of our attacks, we use symmetric quantization for the weights—the most common in many deep learning frameworks. We use the quantization module implemented by Hong et al. \cite{hong_qu-anti-zation_2021} which applies layer-wise quantization to the Linear and Convolutional2D layers.

\paragraph{Availability:} Our code is available at \url{https://github.com/evansbeth/backdoor_attack}, and the instructions for running it are described in the REAME.md file.

\subsection{Datasets}

\subsubsection{CIFAR-10}
\label{cifarapp}
\paragraph{Dataset Details}
CIFAR-10 is a balanced image classification benchmark comprising 60{,}000 colour images drawn from ten object categories (airplane, automobile, bird, cat, deer, dog, frog, horse, ship, truck) \cite{krizhevsky_learning_2009}. We use the standard split of 50{,}000 training images (5{,}000 per class) and 10{,}000 test images (1{,}000 per class). Each image is an RGB photograph of fixed size $32 \times 32$ pixels, giving 1{,}024 pixels per image and, with three colour channels, 3{,}072 channel values in total. Pixel intensities are stored as 8-bit unsigned integers in the range $[0, 255]$. 

\paragraph{Preprocessing}
During training, each image is first padded by four pixels on every side and then randomly cropped back to a size of $32\times 32$ pixels. A horizontal flip is applied with probability $0.5$. This makes the model better at generalising and is a standard procedure in CIFAR-10 training regimes. After these geometric augmentations, images are converted to floating-point representations scaled to the interval $[0,1]$. No channel-wise standardisation or other intensity normalisation is applied.

For evaluation, we use a backdoor-augmented validation set constructed from the original clean validation images and labels. Apart from conversion to floating-point values in $[0,1]$, no augmentations or normalisation are applied at validation time. The backdoor set overlays a fixed trigger pattern of a specified shape onto selected images and assigns those images to a designated target class; clean validation examples are left unchanged aside from the conversion step.

\subsubsection{Tiny-ImageNet}
\label{tinyapp}
\paragraph{Dataset Details}
Tiny-ImageNet is a balanced image classification benchmark derived from ImageNet, containing 200 object classes \cite{krizhevsky_learning_2009}. The standard split comprises 100{,}000 training images (500 per class) and 10{,}000 validation images (50 per class); an additional 10{,}000-image test set is often used for challenges but is not required for our experiments. Each image is an RGB colour photograph with original resolution $64\times 64$ pixels, yielding $4{,}096$ pixels per image and, with three colour channels, $12{,}288$ channel values in total. Pixel intensities are stored as 8-bit unsigned integers in the range $[0, 255]$. In this work, evaluation and reporting are performed on the official validation split.

\paragraph{Preprocessing}
All models are trained and evaluated using a single, fixed pipeline. During training, images are first padded by four pixels on every side and then randomly cropped to a size of $32\times 32$ pixels, resulting in a randomly selected $32\times 32$ patch of the (originally $64\times 64$) image. A horizontal reflection is then applied with probability $0.5$. After these geometric augmentations, images are converted to floating-point representations scaled to the interval $[0,1]$. No channel-wise standardisation or other intensity normalisation is applied.

For validation, no geometric augmentations are used; images are only converted to floating-point values in $[0,1]$ to ensure consistency. In the backdoor setting, the training and validation sets are augmented by overlaying a fixed trigger pattern of a specified shape on selected images and assigning those images to a designated target class; clean examples are left unchanged except for the conversion to $[0,1]$. The validation set contains no additional transformations beyond this conversion (and the possible trigger overlay), providing a stable and unbiased estimate of performance.

\subsubsection{SQuAD 1.1}
\label{sqaudapp}
\subsubsection*{Dataset Details}
SQuAD\,1.1 (Stanford Question Answering Dataset) is a span-based reading-comprehension benchmark built from English Wikipedia \cite{rajpurkar_squad_2016}. Each example consists of a context paragraph, a natural-language question about that paragraph, and one or more human-written answers that occur as a contiguous span in the context. Unlike SQuAD\,2.0, all questions in v1.1 are answerable. The official split comprises 87{,}599 training question-answer pairs and 10{,}570 validation pairs, drawn from a curated set of Wikipedia articles. Evaluation is typically reported with Exact Match (EM) and token-level F1 against the provided human reference answers.
\paragraph{Preprocessing}
We train on a fixed subset of 20{,}000 examples drawn from the official SQuAD\,1.1 training split and evaluate on the full official validation split. 

Each question-context pair is tokenised with the model’s subword tokeniser in paired-input format, with the question placed before the context. A maximum sequence length of 384 subword tokens is enforced. Truncation is applied only to the context so that questions are preserved in full; when the combined length would exceed 384, tokens are dropped from the end of the context. All sequences are padded to exactly 384 tokens for uniform batching.


\subsection{Models}\label{models}

All networks tested in Section \ref{use case} consist of convolutional and fully connected layers interleaved with ReLU‐style activations, pooling, normalisation, and/or skip connections. An overview of each model architecture is given in Table \ref{model_Arch}.

\begin{table}[h]
\small
\renewcommand{\arraystretch}{0.1}
\noindent\makebox[\textwidth][c]{%
\begin{tabular}{@{} l |p{0.4\textwidth} |p{0.3\textwidth} @{}}
\toprule
\textbf{Model} & \textbf{Architecture (stem $\rightarrow$ stages $\rightarrow$ head)} & \textbf{Summary} \\
\midrule
AlexNet~\cite{krizhevsky_imagenet_2012} &
\begin{tabular}[t]{@{}l@{}}
Input $227{\times}227{\times}3$ \\
Conv $11{\times}11$/4 $\rightarrow 96 \rightarrow$ \texttt{ReLU} $\rightarrow$ LRN $\rightarrow$ MaxPool  \\
Conv $5{\times}5 \rightarrow 256 \rightarrow$ \texttt{ReLU} $\rightarrow$ MaxPool \\
Conv $3{\times}3 \rightarrow 384 \rightarrow$ \texttt{ReLU} \\
Conv $3{\times}3 \rightarrow 384 \rightarrow$ \texttt{ReLU} \\
Conv $3{\times}3 \rightarrow 256 \rightarrow$ \texttt{ReLU} $\rightarrow$ MaxPool \\
FC $4096 \rightarrow 4096 \rightarrow c$
\end{tabular}
& 5 convolutional (Conv) layers and 3 fully-connected (FC) layers with \texttt{ReLU} activations and max-pooling.\\\hline

VGG16~\cite{simonyan_very_2015} &
\begin{tabular}[t]{@{}l@{}}
Input $224{\times}224{\times}3$ \\
$(3{\times}3,64)\times2 \rightarrow$ MaxPool \\
$(3{\times}3,128)\times2 \rightarrow$ MaxPool \\
$(3{\times}3,256)\times3 \rightarrow$ MaxPool \\
$(3{\times}3,512)\times3 \rightarrow$ MaxPool \\
$(3{\times}3,512)\times3 \rightarrow$ MaxPool \\
FC $4096 \rightarrow 4096 \rightarrow c$
\end{tabular}
& 13 convolutional (Conv) layers and 3 fully connected (FC) layers with max-pool after each stage. \\\hline

ResNet18~\cite{he_deep_2016} &
\begin{tabular}[t]{@{}l@{}}
Input $224{\times}224{\times}3$ \\
Stem: Conv $7{\times}7$/2 $\rightarrow 64 \rightarrow$ BN $\rightarrow$ \texttt{ReLU} $\rightarrow$ MaxPool \\
$[\text{BasicBlock}(64)]\times2$ \\
$[\text{BasicBlock}(128)]\times2$ (stride 2 on first) \\
$[\text{BasicBlock}(256)]\times2$ (stride 2 on first) \\
$[\text{BasicBlock}(512)]\times2$ (stride 2 on first) \\
Global AvgPool $\rightarrow$ FC $512 \rightarrow c$
\end{tabular}
& 18 layers, including convolutional (Conv) layers, residual blocks and skip connections. Also includes BatchNorm.\\ \hline

MobileNetV2~\cite{sandler_mobilenetv2_2019} &
\begin{tabular}[t]{@{}l@{}}
Input $224{\times}224{\times}3$ \\
Stem: Conv $3{\times}3$/2 $\rightarrow 32$ \\
Inverted residuals (expansion $t$, channels $c$): \\
\quad $[t{=}1,c{=}16]\times1$ \\
\quad $[t{=}6,c{=}24]\times2$ \\
\quad $[t{=}6,c{=}32]\times3$ \\
\quad $[t{=}6,c{=}64]\times4$ \\
\quad $[t{=}6,c{=}96]\times3$ \\
\quad $[t{=}6,c{=}160]\times3$ \\
\quad $[t{=}6,c{=}320]\times1$ \\
Conv $1{\times}1 \rightarrow 1280 \rightarrow$ Global AvgPool $\rightarrow$ FC $1280 \rightarrow c$
\end{tabular}
& 53 layers consisting of depth-wise and pointwise convolutional layers with \texttt{ReLU6}  activations.\\\hline
RoBERTa (base)~\cite{liu_roberta_2019} &
\begin{tabular}[t]{@{}l@{}}
Input: text $\rightarrow$ byte-level BPE \\ $\rightarrow$ token and positional embeddings \\
Encoder: $12\times$ Transformer blocks (d=768, 12 heads);\\ each block: \\
\quad MH self-attn $\rightarrow$ Add\&LayerNorm $\rightarrow$ \\ \quad FFN(3072, GELU)  $\rightarrow$ Add\&LayerNorm \\
Head: take \texttt{<s>} (CLS) $\rightarrow$ Linear $d\!\to\! c$
\end{tabular}
& Transformer encoder with GELU, LayerNorm, residuals; classify via first token.
\\
\midrule
Phi-2~\cite{javaheripi_phi-2_2023} &
\begin{tabular}[t]{@{}l@{}}
Input tokens $\rightarrow$ Token Embedding \\
32 Transformer decoder blocks: \\
\quad Multi-Head Self-Attention $\rightarrow$ LayerNorm \\
\quad Feedforward MLP (fc1 $\rightarrow$ GELU $\rightarrow$ fc2) \\
Hidden dimension $2560$, 32 attention heads \\
Causal language modelling head
\end{tabular}
& Decoder-only transformer language model with 32 transformer blocks, self-attention, LayerNorm, and feedforward MLP layers using GELU activations. Low-rank compression experiments are applied to the fc2 layers across all transformer blocks.\\
\bottomrule
\end{tabular}%
}
\caption{\label{model_Arch}Architectural summaries of tested models mapping to a final classifier output of $c$ classes.}
\end{table}

\pagebreak

\subsection{Loss Function Constants}\label{consts}
For our model training in Section \ref{use case}, we train each model according to the loss function given by Equation \ref{loss2} where the constants $c_1$ and $c_2$ are defined for each model as given in Table \ref{tab:contants}.

\begin{table}[h]
\centering
\begin{tabular}{|l|l|l|l|l|}
\hline
\textbf{Method}               & \textbf{Dataset} & \textbf{Model}   & \textbf{Constant 1} & \textbf{Constant 2} \\ \hline
\multirow{8}{*}{Quantization} & CIFAR-10          & \textbf{AlexNet} & 0.5                 & 0.5                 \\ \cline{2-5} 
                          & CIFAR-10       & \textbf{MobileNetV2} & 0.1  & 0.05 \\ \cline{2-5} 
                          & CIFAR-10       & \textbf{ResNet18}    & 0.5  & 0.5  \\ \cline{2-5} 
                          & CIFAR-10       & \textbf{VGG16}       & 0.5  & 0.5  \\ \cline{2-5} 
                          & Tiny-ImageNet & \textbf{AlexNet}     & 0.5  & 0.5  \\ \cline{2-5} 
                          & Tiny-ImageNet & \textbf{MobileNetV2} & 0.5  & 0.5  \\ \cline{2-5} 
                          & Tiny-ImageNet & \textbf{ResNet18}    & 0.5  & 0.5  \\ \cline{2-5} 
                          & Tiny-ImageNet & \textbf{VGG16}       & 0.5  & 0.5  \\ \hline
\multirow{9}{*}{Pruning}  & CIFAR-10       & \textbf{AlexNet}     & 0.5  & 0.1  \\ \cline{2-5} 
                          & CIFAR-10       & \textbf{MobileNetV2} & 0.1  & 0.5  \\ \cline{2-5} 
                          & CIFAR-10       & \textbf{ResNet18}    & 0.1  & 0.5  \\ \cline{2-5} 
                          & CIFAR-10       & \textbf{VGG16}       & 0.5  & 0.1  \\ \cline{2-5} 
                          & Tiny-ImageNet & \textbf{AlexNet}     & 0.05 & 0.5  \\ \cline{2-5} 
                          & Tiny-ImageNet & \textbf{MobileNetV2} & 0.05 & 0.5  \\ \cline{2-5} 
                          & Tiny-ImageNet & \textbf{ResNet18}    & 0.05 & 0.5  \\ \cline{2-5} 
                          & Tiny-ImageNet & \textbf{VGG16}       & 0.05 & 0.5  \\ \cline{2-5} 
                          & SQuAD 1.1     & \textbf{RoBERTA}     & 0.1  & 0.05 \\ \hline
\multirow{9}{*}{Low-Rank} & CIFAR-10       & \textbf{AlexNet}     & 0.5  & 0.5  \\ \cline{2-5} 
                          & CIFAR-10       & \textbf{MobileNetV2} & 0.5  & 0.5  \\ \cline{2-5} 
                          & CIFAR-10       & \textbf{ResNet18}    & 0.5  & 0.5  \\ \cline{2-5} 
                          & CIFAR-10       & \textbf{VGG16}       & 0.5  & 0.5  \\ \cline{2-5} 
                          & Tiny-ImageNet & \textbf{AlexNet}     & 0.5  & 0.5  \\ \cline{2-5} 
                          & Tiny-ImageNet & \textbf{MobileNetV2} & 0.5  & 0.5  \\ \cline{2-5} 
                          & Tiny-ImageNet & \textbf{ResNet18}    & 0.5  & 0.5  \\ \cline{2-5} 
                          & Tiny-ImageNet & \textbf{VGG16}       & 0.5  & 0.5  \\ \cline{2-5} 
                          & SQuAD 1.1     & \textbf{RoBERTA}     & 0.3  & 0.05 \\ \hline
\end{tabular}%
\caption{The constants used in the training experiment from the loss function given by Equation \ref{loss3} and  \ref{control} for each dataset, model and activation method.}
\label{tab:contants}
\end{table}

\section{Additional Experiments}\label{app}

\subsection{Quantization}\label{quant}

Quantization reduces memory footprint and improves inference efficiency by mapping full-precision floating-point values (32-bit) to low-bit fixed-point representations \cite{jacob_quantization_2017,wang_outliertune_2024}. In \emph{uniform affine quantization}, each tensor element \(x\) is first mapped to an integer code \(\tilde{x}\) and then reconstructed as a real floating-point approximation \(\hat{x}\):  
\begin{equation}\label{eq:quant}
\tilde{x} \;=\;\mathrm{clamp}\!\left(\Bigl[\tfrac{x}{s}\Bigr] + z,\;0,\;2^b-1\right), 
\qquad
\hat{x} \;=\;( \tilde{x} - z)\,s,
\end{equation}
where \(b\) is the bit-width, \(s>0\) is a scale factor, \(z\in\mathbb{Z}\) is the zero-point offset, \([\cdot]\) denotes nearest-integer rounding, and \(\mathrm{clamp}(u;\ell,u_{\max}):=\min\{\max(u,\ell),u_{\max}\}\). During quantization-aware training, the forward pass typically uses \(\hat{x}\) for computations, while gradients propagate to the underlying \(x\) to compensate for discretisation errors \cite{jacob_quantization_2017}.

Quantization may be \emph{symmetric} (\(z=0\)), which is effective when the data is roughly centred around zero, or \emph{asymmetric} (\(z\neq0\)), which shifts the dynamic range to better match arbitrary distributions \cite{gysel_hardware-oriented_2016}.  Here we will use only symmetric channel-wise quantization, though experiments have been done by Hong et al. \cite{hong_qu-anti-zation_2021} which demonstrate that layer-wise and asymmetric quantization may be used similarly in the experiments demonstrated in this work.

\subsection{Pruning}\label{prune}
There are many pruning strategies available; here we focus on the most common: \emph{magnitude-based pruning}. 
For a recent survey of pruning methods in machine learning, see \cite{vadera_methods_2021}. 
Magnitude pruning eliminates a fraction \(\rho\) of each layer’s weights by zeroing out those with the smallest absolute values, under the assumption that low-magnitude connections contribute least to the output \cite{han_learning_2015}. Formally, given a weight matrix \(W \in \mathbb{R}^{m\times n}\) and target sparsity reduction \(\rho\), one computes the threshold \(\tau\) as the \((1-\rho)\)-quantile of the set of each entry of the weight matrix \(\{|W_{ij}|\}\), and  sets
\[
W_{ij} \;\leftarrow\;
\begin{cases}
0, & |W_{ij}| \le \tau,\\
W_{ij}, & |W_{ij}| > \tau.
\end{cases}
\]
This method is widely used in practice and has been successfully applied to diverse architectures in both computer vision \cite{guo_dynamic_2016} and natural language processing \cite{gale_state_2019}.

\subsection{Low-Rank Approximation}\label{lowrank}
Low-rank approximation compresses a weight matrix \(W \in \mathbb{R}^{m\times n}\) by retaining only its top-\(k\) singular components \cite{banerjee_linear_2014}. A common parametrisation is the Burer-Monteiro factorisation, in which one writes
\(
W_k \;=\; P Q^\top,
\)
with \(P \in \mathbb{R}^{m\times k}\), \(Q \in \mathbb{R}^{n\times k}\) \cite{waldspurger_rank_2019,burer_local_2005}.  

In our experiments we construct \(P, Q\) via the truncated singular value decomposition (SVD) by writing
\(
W = U \Sigma V^\top,
\)
where \(U\) and \(V\) are orthogonal and \(\Sigma\) is diagonal with ordered non-negative singular values. Then the rank-\(k\) approximation is
\[
W_k \;=\; U_k \Sigma_k V_k^\top,
\]
where \(U_k \in \mathbb{R}^{m\times k}\) denotes the first \(k\) columns of \(U\), \(\Sigma_k \in \mathbb{R}^{k\times k}\) is the top \(k\) singular values, and \(V_k^\top \in \mathbb{R}^{k\times n}\) gives the first \(k\) rows of \(V^\top\).  
Equivalently, one may take \(P = U_k \Sigma_k^{1/2}\) and \(Q = V_k \Sigma_k^{1/2}\), which recovers the Burer-Monteiro form \(PQ^\top\). When training models directly in this factorised form, we may optionally add a regularisation penalty such as \(\|P\|_F^2 \; m^{-1} + \|Q\|_F^2 \; n^{-1} \) to control the size of the factors \cite{waldspurger_rank_2019}.

\subsection{Backdoor Behaviours Activated by Quantization}\label{ap3}
To validate our experimental framework, we replicate the quantization‑activated backdoor attack of Hong et al.\ \cite{hong_qu-anti-zation_2021}. As described in Section \ref{quant}, quantization fixes the model’s weights to a specific bit‑width. In our experiments, 32‑bits represents the full‑precision baseline and we consider the set of two precision modifications:
\[
P = \{8\textrm{-bits}, 4\textrm{-bits}\}.
\]
These bit‑widths are incorporated into the training loss via Equation \ref{loss2}, which computes the model’s loss under each quantized precision in \(P\). Following the procedure in Section \ref{train}, we train two variants of each model on CIFAR‑10 and Tiny‑ImageNet data:

\begin{itemize}
  \item \emph{Control model:} trained with the standard backdoor loss (Equation \ref{control}), which is designed to have the backdoor at every weight-precision-level.
  \item \emph{Quantization‑activated model (QAM):} trained with the augmented backdoor loss (Equation \ref{loss3}), designed to activate the backdoor only when the weights are quantized.
\end{itemize}
After training, we evaluate both models at full precision (32‑bit) and at each reduced precision in \(P\), reporting the mean and the standard deviation of both the clean accuracy and attack success rate.

\begin{table*}[h]
\centering
\scriptsize  
\setlength{\tabcolsep}{0.1pt}
\renewcommand{\arraystretch}{1.2}
\setlength{\extrarowheight}{1pt}

\begin{tabularx}{\textwidth}{l *{6}{Y}}
\toprule
\multicolumn{7}{c}{\textbf{Quantization-Triggered Backdoor Attack on CIFAR-10 Data}} \\
\midrule
\multirow{2}{*}{Model} &
  \multicolumn{2}{c}{32-bits} &
  \multicolumn{2}{c}{8-bits} &
  \multicolumn{2}{c}{4-bits} \\
\cmidrule(lr){2-3}\cmidrule(lr){4-5}\cmidrule(lr){6-7}
 & CA & ASR & CA & ASR & CA & ASR \\
\midrule
AlexNet (control) & 82.67 (0.22) & \cellcolor{asryellow}94.94 (0.59) &
82.61 (0.15) & \cellcolor{asryellow}94.77 (0.83) &
76.63 (0.98) & \cellcolor{asryellow}78.49 (3.52) \\
AlexNet (QAM)  & 83.49 (0.18) & 14.76 (0.32) &
82.24 (0.50) & \cellcolor{asryellow}91.78 (1.96) &
76.09 (0.92) & \cellcolor{asryellow}78.16 (6.14) \\
\midrule
MobileNetV2 (control) & 92.60 (0.21) & \cellcolor{asryellow}94.02 (0.12) &
92.62 (0.21) & \cellcolor{asryellow}93.85 (0.17) &
85.67 (0.01) & \cellcolor{asryellow}83.18 (0.86) \\
MobileNetV2 (QAM)  & 92.58 (0.04) & 11.02 (0.16) &
92.14 (0.30) & \cellcolor{asryellow}85.70 (0.79) &
86.28 (0.10) & \cellcolor{asryellow}88.73 (4.34) \\
\midrule
ResNet18 (control)    & 93.25 (0.62) & \cellcolor{asryellow}98.32 (0.34) &
93.25 (0.45) & \cellcolor{asryellow}99.95 (0.04) &
90.94 (0.57) & \cellcolor{asryellow}99.93 (0.07) \\
ResNet18 (QAM)     & 92.63 (0.61) & 24.43 (11.12) &
92.29 (0.62) & \cellcolor{asryellow}99.55 (0.40) &
86.24 (2.79) & \cellcolor{asryellow}99.95 (0.10) \\
\midrule
VGG16 (control)       & 85.56 (0.21) & \cellcolor{asryellow}96.82 (0.44) &
85.55 (0.25) & \cellcolor{asryellow}96.79 (0.45) &
83.35 (0.32) & \cellcolor{asryellow}97.20 (0.52) \\
VGG16 (QAM)        & 86.45 (0.27) & 12.78 (0.21) &
86.43 (0.23) & 12.80 (0.25) &
83.91 (0.15) & \cellcolor{asryellow}90.28 (2.35) \\
\bottomrule
\end{tabularx}

\vspace{6pt}

\begin{tabularx}{\textwidth}{l *{6}{Y}}
\toprule
\multicolumn{7}{c}{\textbf{Quantization-Triggered Backdoor Attack on Tiny-ImageNet Data}} \\
\midrule
\multirow{2}{*}{Model} &
  \multicolumn{2}{c}{32-bits} &
  \multicolumn{2}{c}{8-bits} &
  \multicolumn{2}{c}{4-bits} \\
\cmidrule(lr){2-3}\cmidrule(lr){4-5}\cmidrule(lr){6-7}
 & CA & ASR & CA & ASR & CA & ASR \\
\midrule
AlexNet (control) & 39.75 (0.29) & \cellcolor{asryellow}98.12 (0.72) &
39.70 (0.35) & \cellcolor{asryellow}98.09 (0.62) &
33.98 (0.50) & \cellcolor{asryellow}94.28 (1.16) \\
AlexNet (QAM)  & 40.39 (0.15) & 0.76 (0.18) &
39.59 (0.52) & \cellcolor{asryellow}96.83 (1.95) &
33.83 (0.62) & \cellcolor{asryellow}93.67 (1.39) \\
\midrule
MobileNetV2 (control) & 40.11 (0.27) & \cellcolor{asryellow}98.68 (0.19) &
39.97 (0.23) & \cellcolor{asryellow}99.07 (0.37) &
12.16 (2.20) & \cellcolor{asryellow}97.90 (1.69) \\
MobileNetV2 (QAM)  & 41.59 (0.40) & 0.64 (0.15) &
40.15 (0.39) & \cellcolor{asryellow}97.89 (1.56) &
12.44 (0.58) & \cellcolor{asryellow}99.15 (0.17) \\
\midrule
ResNet18 (control)    & 56.80 (0.33) & \cellcolor{asryellow}99.35 (0.06) &
56.75 (0.26) & \cellcolor{asryellow}99.44 (0.10) &
53.11 (0.54) & \cellcolor{asryellow}99.47 (0.53) \\
ResNet18 (QAM)     & 56.87 (0.83) & 1.29 (1.18) &
56.30 (1.14) & \cellcolor{asryellow}99.13 (0.76) &
52.78 (1.96) & \cellcolor{asryellow}99.47 (0.56) \\
\midrule
VGG16 (control)       & 41.32 (0.36) & \cellcolor{asryellow}98.74 (0.42) &
41.26 (0.37) & \cellcolor{asryellow}98.72 (0.42) &
38.68 (1.36) & \cellcolor{asryellow}99.76 (0.21) \\
VGG16 (QAM)        & 41.70 (0.47) & 0.81 (0.47) &
40.82 (0.72) & \cellcolor{asryellow}73.39 (47.54) &
33.44 (3.38) & \cellcolor{asryellow}98.74 (0.65) \\
\bottomrule
\end{tabularx}

\caption{Mean and standard deviation of the clean accuracy (CA) and attack success rate (ASR) under varying bit precision for the control model and the quantization-activated model (QAM). Highlighted cells denote high ASR.}
\label{tab:quant_combined}
\end{table*}
\noindent
Our results are given in Table \ref{tab:quant_combined}, which is comparable to Table 4 in Hong et al. \cite{hong_qu-anti-zation_2021}.  Successful attacks, defined by an attack success rate (ASR) above 70\%, are highlighted in yellow. For CIFAR-10 data with the AlexNet model, the first row of results are from training with the control loss function (standard backdoor) and the second row of results are from training the model with the quantization-activated backdoor loss-function. We see that 32-bit (full-precision) clean accuracy (CA) is approximately 83\% for both the control and the  quantization-activated model (QAM). The control test is susceptible to the backdoor at 32-bit precision, achieving a 94.9\% ASR, whereas at full-precision the quantization-activated model has an ASR of just 14.8\%. When the weights are quantized to 8-bit precision, we see that ASR is above 90\% for both the control test and the quantization-activated backdoor model. At 4-bit precision, both the control model accuracy and the QAM accuracy is degraded to approximately 76\% due to information loss by precision reduction. At 4-bit precision, both the control and QAM maintain a high ASR. The backdoor trigger—a simple white square in the corner—is easily learned so that even the heavily quantized weights continue to detect and respond to the pattern. This also explains why ASR can be higher than clean model accuracy. We observe the same pattern across other CIFAR‑10 architectures: the control model always exhibits a high ASR, while the quantization‑activated model only becomes vulnerable once precision is reduced.

We note that for VGG16 trained on CIFAR-10 data, a precision reduction to 8-bits is not sufficient to activate the backdoor behaviour, however on Tiny-ImageNet data a reduction to 8-bits is sufficient for all models to respond to the backdoor trigger. Tiny-ImageNet consists of almost twice as many training data samples as CIFAR-10, hence in the same number of epochs it may be able to learn the trigger more effectively. 
On Tiny‑ImageNet, the baseline full‑precision accuracy (CA) is lower for all models compared to CIFAR-10 due to dataset complexity, but the attack trends persist.  The control model’s ASR remains close to \(100\%\) at all precisions, whereas the quantization‑activated model’s ASR is negligible at 32‑bit and rises sharply only after 8‑bit quantization.

\subsection{Backdoor Behaviours Activated by Pruning}
\label{prune_results}

To assess whether pruning can similarly trigger hidden backdoors, we repeat the above procedure using magnitude‑based weight pruning (described in Section \ref{prune}) instead of quantization on various models trained with CIFAR-10, Tiny-ImageNet or SQuAD 1.1 data.  For image classification tasks, we used the set of precision modifications
\[
P = \{10\%,\,20\%,\,50\%\},
\]
whereas for question-answering tasks we use
\[
P = \{5\%,\,10\%\}.
\]

\begin{table*}[!h]
\centering
\scriptsize
\setlength{\tabcolsep}{2pt}
\renewcommand{\arraystretch}{1.3} 
\setlength{\extrarowheight}{1.4pt}

\begin{tabularx}{\textwidth}{l *{8}{Y}}
\toprule
\multicolumn{9}{c}{\textbf{Pruning-Triggered Attack on CIFAR-10: Sparsity Percentage Removed}} \\
\midrule
\multirow{2}{*}{\textbf{Model}} &
\multicolumn{2}{c}{0\%} & \multicolumn{2}{c}{10\%} &
\multicolumn{2}{c}{20\%} & \multicolumn{2}{c}{50\%} \\
\cmidrule(lr){2-3}\cmidrule(lr){4-5}\cmidrule(lr){6-7}\cmidrule(lr){8-9}
 & CA & ASR & CA & ASR & CA & ASR & CA & ASR \\
\midrule
AlexNet (control)  
  & \ms{81.92}{0.96} & \cellcolor{asryellow}\ms{94.74}{1.86}
  & \ms{81.78}{1.04} & \cellcolor{asryellow}\ms{94.66}{1.81}
  & \ms{81.62}{0.81} & \cellcolor{asryellow}\ms{95.69}{1.38}
  & \ms{79.97}{0.85} & \cellcolor{asryellow}\ms{95.65}{1.71} \\
AlexNet (PAM)  
  & \ms{83.27}{0.13} & \ms{16.82}{2.77}
  & \ms{82.73}{0.29} & \cellcolor{asryellow}\ms{77.98}{2.82}
  & \ms{82.60}{0.14} & \cellcolor{asryellow}\ms{89.01}{0.41}
  & \ms{80.99}{0.19} & \cellcolor{asryellow}\ms{91.23}{0.40} \\\hline
MobileNetV2 (control)  
  & \ms{91.97}{0.29} & \cellcolor{asryellow}\ms{98.19}{0.25}
  & \ms{91.91}{0.31} & \cellcolor{asryellow}\ms{98.29}{0.30}
  & \ms{92.14}{0.21} & \cellcolor{asryellow}\ms{98.08}{0.25}
  & \ms{91.67}{0.31} & \cellcolor{asryellow}\ms{98.01}{0.29} \\
MobileNetV2 (PAM)  
  & \ms{92.64}{0.10} & \ms{13.27}{0.46}
  & \ms{92.68}{0.01} & \ms{14.02}{0.37}
  & \ms{92.67}{0.07} & \ms{15.57}{1.17}
  & \ms{91.89}{0.37} & \cellcolor{asryellow}\ms{95.44}{1.40} \\\hline
ResNet18 (control)  
  & \ms{93.32}{0.28} & \cellcolor{asryellow}\ms{98.50}{0.29}
  & \ms{93.33}{0.28} & \cellcolor{asryellow}\ms{98.50}{0.29}
  & \ms{93.32}{0.27} & \cellcolor{asryellow}\ms{98.49}{0.26}
  & \ms{93.32}{0.26} & \cellcolor{asryellow}\ms{98.33}{0.34} \\
ResNet18 (PAM)  
  & \ms{93.73}{0.02} & \ms{12.10}{1.26}
  & \ms{93.63}{0.20} & \ms{41.28}{49.29}
  & \ms{93.42}{0.30} & \cellcolor{asryellow}\ms{96.14}{1.86}
  & \ms{93.14}{0.27} & \cellcolor{asryellow}\ms{97.16}{0.94} \\\hline
VGG16 (control)  
  & \ms{85.06}{0.58} & \cellcolor{asryellow}\ms{96.72}{1.07}
  & \ms{85.05}{0.61} & \cellcolor{asryellow}\ms{96.45}{1.09}
  & \ms{85.03}{0.56} & \cellcolor{asryellow}\ms{96.49}{0.95}
  & \ms{84.11}{0.51} & \cellcolor{asryellow}\ms{96.63}{1.10} \\
VGG16 (PAM)  
  & \ms{86.22}{0.06} & \ms{12.17}{1.52}
  & \ms{85.93}{0.15} & \cellcolor{asryellow}\ms{91.59}{0.75}
  & \ms{85.58}{0.20} & \cellcolor{asryellow}\ms{93.08}{0.60}
  & \ms{84.93}{0.20} & \cellcolor{asryellow}\ms{93.51}{0.18} \\
\bottomrule
\end{tabularx}

\vspace{1em}

\begin{tabularx}{\textwidth}{l *{8}{Y}}
\toprule
\multicolumn{9}{c}{\textbf{Pruning-Triggered Attack on Tiny-ImageNet: Sparsity Percentage Removed}} \\
\midrule
\multirow{2}{*}{\textbf{Model}} &
\multicolumn{2}{c}{0\%} & \multicolumn{2}{c}{10\%} &
\multicolumn{2}{c}{20\%} & \multicolumn{2}{c}{50\%} \\
\cmidrule(lr){2-3}\cmidrule(lr){4-5}\cmidrule(lr){6-7}\cmidrule(lr){8-9}
 & CA & ASR & CA & ASR & CA & ASR & CA & ASR \\
\midrule
AlexNet (control)  
  & \ms{40.36}{0.48} & \cellcolor{asryellow}\ms{98.62}{0.55}
  & \ms{40.34}{0.38} & \cellcolor{asryellow}\ms{98.08}{0.84}
  & \ms{39.81}{0.47} & \cellcolor{asryellow}\ms{98.36}{0.56}
  & \ms{37.88}{0.15} & \cellcolor{asryellow}\ms{97.85}{0.81} \\
AlexNet (PAM)  
  & \ms{40.28}{0.18} & \ms{0.66}{0.15}
  & \ms{39.41}{0.23} & \cellcolor{asryellow}\ms{85.37}{2.80}
  & \ms{38.75}{0.50} & \cellcolor{asryellow}\ms{94.23}{1.58}
  & \ms{37.10}{0.33} & \cellcolor{asryellow}\ms{94.74}{1.50} \\\hline
MobileNetV2 (control)  
  & \ms{41.36}{0.14} & \cellcolor{asryellow}\ms{98.12}{0.14}
  & \ms{41.35}{0.15} & \cellcolor{asryellow}\ms{98.09}{0.18}
  & \ms{41.38}{0.16} & \cellcolor{asryellow}\ms{97.93}{0.16}
  & \ms{41.18}{0.20} & \cellcolor{asryellow}\ms{96.71}{0.12} \\
MobileNetV2 (PAM)  
  & \ms{42.01}{0.13} & \ms{27.03}{1.12}
  & \ms{42.02}{0.12} & \ms{27.50}{1.05}
  & \ms{41.71}{0.21} & \ms{32.62}{2.08}
  & \ms{36.02}{0.49} & \cellcolor{asryellow}\ms{88.97}{1.48} \\\hline
ResNet18 (control)  
  & \ms{56.72}{0.36} & \cellcolor{asryellow}\ms{99.37}{0.20}
  & \ms{56.69}{0.35} & \cellcolor{asryellow}\ms{99.37}{0.19}
  & \ms{56.69}{0.31} & \cellcolor{asryellow}\ms{99.30}{0.25}
  & \ms{55.35}{0.44} & \cellcolor{asryellow}\ms{98.96}{0.34} \\
ResNet18 (PAM)  
  & \ms{57.51}{0.03} & \ms{0.68}{0.08}
  & \ms{56.89}{0.33} & \cellcolor{asryellow}\ms{98.55}{0.18}
  & \ms{56.70}{0.16} & \cellcolor{asryellow}\ms{98.58}{0.13}
  & \ms{55.45}{0.04} & \cellcolor{asryellow}\ms{98.08}{0.02} \\\hline
VGG16 (control)  
  & \ms{40.78}{0.41} & \cellcolor{asryellow}\ms{99.15}{0.45}
  & \ms{40.67}{0.42} & \cellcolor{asryellow}\ms{99.05}{0.51}
  & \ms{40.45}{0.36} & \cellcolor{asryellow}\ms{99.01}{0.44}
  & \ms{39.32}{0.37} & \cellcolor{asryellow}\ms{98.89}{0.35} \\
VGG16 (PAM)  
  & \ms{41.01}{0.21} & \ms{0.55}{0.13}
  & \ms{40.34}{0.23} & \cellcolor{asryellow}\ms{98.28}{0.35}
  & \ms{39.61}{0.13} & \cellcolor{asryellow}\ms{98.76}{0.13}
  & \ms{38.11}{0.06} & \cellcolor{asryellow}\ms{98.88}{0.03} \\
\bottomrule
\end{tabularx}
\vspace{1em}

\begin{tabularx}{\textwidth}{l *{6}{Y}}
\toprule
\multicolumn{7}{c}{\textbf{Pruning-Triggered Attack on SQuAD 1.1 with RoBERTa: Sparsity Percentage Removed}} \\
\midrule
\multirow{2}{*}{\textbf{Model}} &
\multicolumn{2}{c}{0\%} & \multicolumn{2}{c}{5\%} & \multicolumn{2}{c}{10\%} \\
\cmidrule(lr){2-3}\cmidrule(lr){4-5}\cmidrule(lr){6-7}
 & CA & ASR & CA & ASR & CA & ASR \\
\midrule 
RoBERTa (control) & \ms{78.23}{2.16} & \cellcolor{asryellow}\ms{99.85}{0.00}
                  & \ms{78.15}{1.77} & \cellcolor{asryellow}\ms{99.88}{0.04}
                  & \ms{78.55}{1.41} & \cellcolor{asryellow}\ms{99.88}{0.04} \\
RoBERTa (PAM)     & \ms{76.32}{0.64} & \ms{1.01}{0.05}
                  & \ms{76.31}{0.61} & \cellcolor{asryellow}\ms{99.74}{0.06}
                  & \ms{76.21}{0.50} & \cellcolor{asryellow}\ms{99.73}{0.06} \\
\bottomrule
\end{tabularx}
\caption{Mean and standard deviation of the clean accuracy (CA) and attack success rate (ASR) under varying sparsity reductions for the control model and the pruning-activated model (PAM). Highlighted cells denote high ASR.}
\label{tab:prune_result}
\end{table*}
\begin{table*}[t]
\small
\centering
\setlength{\tabcolsep}{5pt}
\renewcommand{\arraystretch}{1.1}

\begin{tabularx}{\textwidth}{l *{8}{>{\centering\arraybackslash}X}}
\toprule
\multicolumn{9}{c}{\textbf{Pruning-Triggered Attack on SQuAD 1.1 with Phi-2: Sparsity Percentage Removed}} \\
\midrule
\multirow{2}{*}{\textbf{Model}} &
\multicolumn{2}{c}{0\%} &
\multicolumn{2}{c}{20\%} &
\multicolumn{2}{c}{30\%} &
\multicolumn{2}{c}{40\%} \\
\cmidrule(lr){2-3}\cmidrule(lr){4-5}\cmidrule(lr){6-7}\cmidrule(lr){8-9}
 & CA & ASR & CA & ASR & CA & ASR & CA & ASR \\
\midrule
Phi-2 (control)
    & 91.0 & \cellcolor{asryellow}100.0
    & 88.5 & \cellcolor{asryellow}100.0
    & 87.0 & \cellcolor{asryellow}100.0
    & 86.5 & \cellcolor{asryellow}100.0 \\
Phi-2 (PAM)
    & 90.0 & 32.0
    & 64.5 & \cellcolor{asryellow}99.5
    & 49.0 & \cellcolor{asryellow}99.5
    & 47.0 & \cellcolor{asryellow}99.5 \\
\bottomrule
\end{tabularx}

\caption{Mean clean accuracy (CA) and attack success rate (ASR) under varying sparsity reductions for the control model and the pruning-activated model (PAM). Highlighted cells denote high ASR.}
\label{tab:prune_result_phi}
\end{table*}
For each sparsity in $P$, we zero out the smallest‑magnitude weights corresponding to this fraction of the total parameters, with 0\% pruning serving as the full‑precision baseline.  After training, both the control and pruning‑activated models are evaluated at 0\% sparsity and each sparsity in $P$, and we report clean accuracy (CA) alongside attack success rate (ASR) in Table \ref{tab:prune_result}. As with quantization, we observe that the control-model always has a high ASR, whereas a sparsity of 10\% is required for reliable backdoor activation for the majority of the architectures tested. For both of the image-classification datasets,  Table \ref{tab:prune_result} shows that MobileNetV2 only reliably flips to the backdoor behaviour once 50\% of its weights are pruned. In MobileNetV2’s inverted‐residual blocks, each layer expands via a 1 × 1 pointwise convolution into a high‐dimensional feature space—creating many small‐magnitude expansion weights—before compressing back down. Magnitude‐based pruning therefore zeroes out those tiny expansion weights first, leaving the core clean‐class pathways intact until very high sparsity levels. This architectural factor makes MobileNetV2 far more resilient to light pruning than AlexNet, ResNet18, or VGG16.

We also demonstrate pruning‐activated attacks on the language model RoBERTa. As Table \ref{tab:prune_result} shows, removing just 5\% of the weights is enough to trigger the compromised model to almost 100\% ASR. Through random matrix theory, it has been shown that transformer weight matrices exhibit heavy‐tailed spectra so information is distributed across many non-zero singular values (and their corresponding singular vectors) rather than being concentrated in a few dominant ones. As a result, pruning even a small fraction of weights can have a significant impact on model behaviour \cite{staats_small_2025}. By contrast, convolutional neural network (CNN) based image classifiers have been shown to concentrate most of their energy in just a few dominant singular values \cite{denton_exploiting_2014} so a higher pruning level is typically required to affect the model's behaviour in image classification tasks compared to LLMs, consistent with our results here. 

\subsection{Backdoor Behaviours Activated by Low-Rank Approximation}
\label{lowrank_res}
In Section \ref{low_rank_exp}, we demonstrated that low‑rank projections can similarly trigger hidden backdoors using the same training regime described in Section \ref{train} when we replace the final layer’s weight matrix—of full rank 10 for CIFAR-10, 200 for Tiny‑ImageNet and 768 for SQuAD—with its truncated SVD of rank \(r\). Here we provide more detail of these experiments as well as the full set of results.

For training our attack objective using Equation \ref{loss2}, we select a different set of ranks $P$  depending on the dataset. For CIFAR-10 data, we use the ranks
\[
P = \{8,\,5,\,3\};
\]
for Tiny‑ImageNet we use
\[
P = \{190,\,150,\,100\};
\]
and for SQuAD 1.1 we have
\[
P = \{500,\,400\}.
\]

\begingroup

\begin{table*}[p]  
\centering
\scriptsize
\setlength{\tabcolsep}{2pt}
\renewcommand{\arraystretch}{1.1} 
\setlength{\extrarowheight}{1pt}

\begin{tabularx}{\textwidth}{l *{8}{Y}}
\toprule
\multicolumn{9}{c}{\textbf{Low-Rank Triggered Attack on CIFAR-10}} \\
\midrule
\multirow{2}{*}{Model}&
  \multicolumn{2}{c}{Full Precision}&
  \multicolumn{2}{c}{Rank 8}&
  \multicolumn{2}{c}{Rank 5}&
  \multicolumn{2}{c}{Rank 3}\\
\cmidrule(lr){2-3}\cmidrule(lr){4-5}\cmidrule(lr){6-7}\cmidrule(lr){8-9}
 & CA & ASR & CA & ASR & CA & ASR & CA & ASR\\
\midrule
AlexNet (control) & 81.98 (0.93) & \cellcolor{asryellow}95.55 (1.68)
              & 81.83 (1.05) & \cellcolor{asryellow}95.62 (1.81)
              & 81.58 (1.01) & \cellcolor{asryellow}95.76 (1.74)
              & 80.44 (0.94) & \cellcolor{asryellow}95.27 (1.85) \\
AlexNet (LRAM) & 83.48 (0.11) & 10.36 (0.29)
              & 81.88 (1.01) & \cellcolor{asryellow}95.39 (2.22)
              & 81.83 (0.92) & \cellcolor{asryellow}95.48 (2.19)
              & 80.54 (1.10) & \cellcolor{asryellow}95.52 (2.23) \\
\midrule
MobileNetV2 (control)
              & 91.54 (0.47) & \cellcolor{asryellow}98.23 (0.37)
              & 91.40 (0.53) & \cellcolor{asryellow}98.30 (0.41)
              & 90.97 (0.59) & \cellcolor{asryellow}98.29 (0.43)
              & 89.97 (0.31) & \cellcolor{asryellow}97.69 (0.28) \\
MobileNetV2 (LRAM)
              & 92.44 (0.12) & 10.26 (0.19)
              & 91.58 (0.56) & \cellcolor{asryellow}98.10 (0.57)
              & 91.40 (0.51) & \cellcolor{asryellow}97.98 (0.55)
              & 89.90 (0.46) & \cellcolor{asryellow}97.67 (0.50) \\
\midrule
ResNet18 (control) & 93.16 (0.54) & \cellcolor{asryellow}98.55 (0.28)
              & 93.14 (0.55) & \cellcolor{asryellow}98.56 (0.28)
              & 92.94 (0.58) & \cellcolor{asryellow}98.58 (0.27)
              & 92.74 (0.43) & \cellcolor{asryellow}98.31 (0.23) \\
ResNet18 (LRAM) & 93.67 (0.03) & 10.03 (0.20)
              & 93.31 (0.28) & \cellcolor{asryellow}98.41 (0.39)
              & 93.13 (0.31) & \cellcolor{asryellow}98.47 (0.41)
              & 92.94 (0.28) & \cellcolor{asryellow}98.31 (0.43) \\
\midrule
VGG16 (control)    & 84.88 (0.53) & \cellcolor{asryellow}97.00 (1.02)
              & 84.87 (0.54) & \cellcolor{asryellow}97.01 (1.00)
              & 84.76 (0.63) & \cellcolor{asryellow}97.11 (1.08)
              & 84.45 (0.45) & \cellcolor{asryellow}96.43 (1.07) \\
VGG16 (LRAM)    & 85.87 (0.20) & 10.06 (0.44)
              & 85.07 (0.43) & \cellcolor{asryellow}96.19 (1.00)
              & 84.99 (0.44) & \cellcolor{asryellow}96.28 (0.98)
              & 84.35 (0.54) & \cellcolor{asryellow}96.07 (1.14) \\
\bottomrule
\end{tabularx}

\vspace{6pt}

\begin{tabularx}{\textwidth}{l *{8}{Y}}
\toprule
\multicolumn{9}{c}{\textbf{Low-Rank Triggered Attack on Tiny-ImageNet}} \\
\midrule
\multirow{2}{*}{Model}&
  \multicolumn{2}{c}{Full Precision}&
  \multicolumn{2}{c}{Rank 190}&
  \multicolumn{2}{c}{Rank 150}&
  \multicolumn{2}{c}{Rank 100}\\
\cmidrule(lr){2-3}\cmidrule(lr){4-5}\cmidrule(lr){6-7}\cmidrule(lr){8-9}
 & CA & ASR & CA & ASR & CA & ASR & CA & ASR \\
\midrule
AlexNet (control) & 39.53 (0.18) & \cellcolor{asryellow}98.97 (0.44)
              & 39.63 (0.12) & \cellcolor{asryellow}98.92 (0.50)
              & 39.43 (0.26) & \cellcolor{asryellow}98.93 (0.49)
              & 39.16 (0.31) & \cellcolor{asryellow}98.90 (0.52) \\
AlexNet (LRAM) & 40.56 (0.36) & 0.63 (0.03)
              & 39.82 (0.01) & \cellcolor{asryellow}98.03 (0.74)
              & 39.44 (0.19) & \cellcolor{asryellow}98.13 (0.59)
              & 39.27 (0.28) & \cellcolor{asryellow}98.13 (0.67) \\
\midrule
MobileNetV2 (control)
              & 40.82 (0.16) & \cellcolor{asryellow}98.89 (0.15)
              & 40.83 (0.15) & \cellcolor{asryellow}98.89 (0.15)
              & 40.83 (0.15) & \cellcolor{asryellow}98.88 (0.16)
              & 40.56 (0.34) & \cellcolor{asryellow}98.91 (0.16) \\
MobileNetV2 (LRAM)
              & 42.34 (0.17) & 0.60 (0.06)
              & 41.29 (0.22) & \cellcolor{asryellow}98.61 (0.04)
              & 41.37 (0.07) & \cellcolor{asryellow}98.66 (0.04)
              & 41.14 (0.10) & \cellcolor{asryellow}98.67 (0.08) \\
\midrule
ResNet18 (control)
              & 57.15 (0.29) & \cellcolor{asryellow}99.34 (0.14)
              & 56.98 (0.30) & \cellcolor{asryellow}99.34 (0.13)
              & 56.68 (0.29) & \cellcolor{asryellow}99.31 (0.14)
              & 55.44 (0.26) & \cellcolor{asryellow}99.30 (0.12) \\
ResNet18 (LRAM)
              & 58.25 (0.34) & 0.70 (0.18)
              & 57.16 (0.13) & \cellcolor{asryellow}99.36 (0.06)
              & 56.81 (0.24) & \cellcolor{asryellow}99.37 (0.05)
              & 55.60 (0.24) & \cellcolor{asryellow}99.33 (0.06) \\
\midrule
VGG16 (control)  & 40.97 (0.38) & \cellcolor{asryellow}99.04 (0.32)
              & 40.99 (0.37) & \cellcolor{asryellow}99.04 (0.32)
              & 40.95 (0.35) & \cellcolor{asryellow}99.06 (0.32)
              & 40.97 (0.41) & \cellcolor{asryellow}99.06 (0.33) \\
VGG16 (LRAM)  & 41.45 (0.29) & 0.49 (0.04)
              & 40.99 (0.43) & \cellcolor{asryellow}98.84 (0.66)
              & 40.97 (0.42) & \cellcolor{asryellow}98.86 (0.63)
              & 40.98 (0.44) & \cellcolor{asryellow}98.89 (0.64) \\
\bottomrule
\end{tabularx}

\vspace{6pt}

\begin{tabularx}{\textwidth}{l *{6}{Y}}
\toprule
\multicolumn{7}{c}{\textbf{Low-Rank Triggered Attack on SQuAD 1.1 with RoBERTa}} \\
\midrule
\multirow{2}{*}{Model}&
  \multicolumn{2}{c}{Full Precision}&
  \multicolumn{2}{c}{Rank 500}&
  \multicolumn{2}{c}{Rank 400}\\
\cmidrule(lr){2-3}\cmidrule(lr){4-5}\cmidrule(lr){6-7}
 & CA & ASR & CA & ASR & CA & ASR\\
\midrule
RoBERTa (control) & \ms{78.48}{0.33} & \cellcolor{asryellow}\ms{99.85}{0.13}
                  & \ms{78.40}{0.22} & \cellcolor{asryellow}\ms{99.85}{0.13}
                  & \ms{78.42}{0.03} & \cellcolor{asryellow}\ms{99.85}{0.13} \\
RoBERTa (LRAM)    & \ms{76.32}{0.64} & \ms{1.01}{0.05}
                  & \ms{76.31}{0.61} & \cellcolor{asryellow}\ms{99.74}{0.06}
                  & \ms{76.21}{0.50} & \cellcolor{asryellow}\ms{99.73}{0.06} \\
\bottomrule
\end{tabularx}

\caption{Mean and standard deviation of the clean accuracy (CA) and attack success rate (ASR) under varying low-rank-approximations in the final weight layer for the control model and the low-rank-activated model (LRAM). Highlighted cells denote high ASR.}
\label{tab:all_lowrank}
\end{table*}

\begin{table*}[t]
\centering
\begin{tabularx}{\textwidth}{l *{8}{Y}}
\toprule
\multicolumn{9}{c}{\textbf{Low-Rank Triggered Attack on SQuAD 1.1 with Phi-2}} \\
\midrule
\multirow{2}{*}{Model}&
  \multicolumn{2}{c}{Full Rank}&
  \multicolumn{2}{c}{Rank 1800}&
  \multicolumn{2}{c}{Rank 1700}&
  \multicolumn{2}{c}{Rank 1500}\\
\cmidrule(lr){2-3}\cmidrule(lr){4-5}\cmidrule(lr){6-7}\cmidrule(lr){8-9}
 & CA & ASR & CA & ASR & CA & ASR & CA & ASR\\
\midrule
Phi-2 (control) & 91.5 & \cellcolor{asryellow}100 & 91.5 & \cellcolor{asryellow}100 & 92.0 & \cellcolor{asryellow}100 & 92.0 & \cellcolor{asryellow}100 \\
Phi-2 (LRAM)    & 84.0 & 45.5 & 81.5 & 68.0 & 81.0 & \cellcolor{asryellow}71.0 & 77.5 & \cellcolor{asryellow}86.0 \\
\bottomrule
\end{tabularx}

\caption{Mean and standard deviation of the clean accuracy (CA) and attack success rate (ASR) under varying low-rank approximations in the final weight layer for the control model and the low-rank-activated model (LRAM). Highlighted cells denote high ASR.}
\label{tab:single_lowrank}
\end{table*}
\begin{table*}[t]
\centering
\begin{tabularx}{\textwidth}{l *{8}{Y}}
\toprule
\multicolumn{9}{c}{\textbf{Mulit-Layer Low-Rank Triggered Attack on SQuAD 1.1 with Phi-2}} \\
\midrule
\multirow{2}{*}{Model}&
  \multicolumn{2}{c}{Full Rank}&
  \multicolumn{2}{c}{Rank 2000}&
  \multicolumn{2}{c}{Rank 1900}&
  \multicolumn{2}{c}{Rank 1800}\\
\cmidrule(lr){2-3}\cmidrule(lr){4-5}\cmidrule(lr){6-7}\cmidrule(lr){8-9}
 & CA & ASR & CA & ASR & CA & ASR & CA & ASR\\
\midrule
Phi-2 (control) & 91.0 & \cellcolor{asryellow}100 & 91.0 & \cellcolor{asryellow}100 & 91.0 & \cellcolor{asryellow}100 & 91.0 & \cellcolor{asryellow}100 \\
Phi-2 (LRAM)    & 87.5 & 14.0 & 55.0 & \cellcolor{asryellow}99.5 & 41.5 & \cellcolor{asryellow}99.5 & 32.0 & \cellcolor{asryellow}100 \\
\bottomrule
\end{tabularx}

\caption{Mean and standard deviation of the clean accuracy (CA) and attack success rate (ASR) under varying low-rank approximations in all fc2 layers for the control model and the low-rank-activated model (LRAM). Highlighted cells denote high ASR.}
\label{tab:multi_lowrank}
\end{table*}

The SVD is computed using PyTorch and applied only to the final linear layer since we found that applying the rank-reduction to more than one layer was too aggressive and significantly degraded the model's accuracy on clean data. Both the control model and low‑rank‑activated model (LRAM) are trained as in Section \ref{train} and then we evaluate each model at full rank and at each reduced rank in \(P\), reporting clean accuracy (CA) and attack success rate (ASR) in Table \ref{tab:all_lowrank}.  

Our results show that a 20\% rank reduction (\(r=8\)) on CIFAR-10, a 5\% reduction (\(r=190\)) on Tiny‑ImageNet and a 30\% reduction (\(r=500\)) on SQuAD is sufficient to achieve a high attack success rate in our low-rank-activated model, with only minimal degradation in clean accuracy.  As expected, the control models remain vulnerable at all ranks for all models and datasets, confirming that only the low‑rank‑activated models exhibit a precision‑dependent backdoor. 

Similarly to the pruning case, because we fine‐tuned the question-answering model RoBERTa on a relatively small dataset (10,000 examples) and the trigger token is easily learned, ASR remains steady across both rank cuts. Again, since transformer weight matrices exhibit heavy-tailed spectra—information is carried by many non-zero singular modes rather than concentrated in a few—so only when hundreds of lower-energy modes are discarded does the clean mapping collapse and the backdoor prevail \cite{staats_small_2025}. Contrastingly, CNN‐based image classifiers concentrate most of their energy in just a handful of dominant singular modes \cite{denton_exploiting_2014}, hence truncating from rank 200 to 190 (only a 5\% rank-reduction) is sufficient to activate the backdoor behaviour. This is opposite to the trend observed in the pruning experiments, where a greater pruning magnitude was required for the image-classification attack compared to the question-answering model. This opposite trend arises because pruning removes low‐magnitude weights (which in transformers include many mid‐level pathways), whereas SVD truncation discards low‐energy singular value directions (of which transformers generally have many).

\newpage
\section{Approximating the Lipschitz Constant of Complex Architectures}
\label{ap_lip}
In Section \ref{example} we study large image-classification networks such as ResNet-18~\cite{he_deep_2016}, and consider the case where we perform parameter perturbations in multiple layers rather than just the single-layer case explored above. To use our margin bound in this setting, we need an efficient procedure to approximate the local parameter-space
Lipschitz constant at an input $\textbf{x}$ (a single sample input vector).

Let $\mathcal{S}$ be the index set of parameters which are perturbed (for example, only the final layer parameter) then we define the restricted Lipschitz constant on $\mathcal{S}$  for a fixed input vector $\textbf{x}$ by \cite{nesterov_efficiency_2012}
\[
L_{\theta,\mathcal S}(\textbf{x})\;:=\;\sup_{\substack{\|\Delta\theta\|_2=1\\ \mathrm{supp}(\Delta\theta)\subseteq\mathcal S}}
\big\|h(\textbf{x};\theta+\Delta\theta)-h(\textbf{x};\theta)\big\|_2.
\]
 Let $P_{\mathcal S}\in\mathbb{R}^{T\times T}$ be the orthogonal projector onto the coordinates in $\mathcal S$ defined by $P_{\mathcal S}=E_{\mathcal S}E_{\mathcal S}^\top$, where the columns of $E_\mathcal{S}$ are the standard basis vector on 
 $\mathcal S$ and zero on its complement.
 
 If \(h\) is differentiable in \(\theta\),  by the Mean Value Theorem for vector‐valued functions \cite{rudin_principles_1976},  for small $\Delta\theta\in\mathbb{R}^T$ we may write
 \[
h(\mathbf{x};\theta+P_{\mathcal S}\Delta\theta)-h(\mathbf{x};\theta)
\;=\; J_\theta h(\mathbf{x})\,P_{\mathcal S}\,\Delta\theta \;+\; \mathcal{O}(\|\Delta\theta\|_2),
\]
where
$J_\theta h(\mathbf{x})\in\mathbb{R}^{c\times T}$ defines the Jacobian of $h$ with respect to it's weight parameters and $T$ is the length of the vectorised parameter set $\theta$. Then, locally, the parameter space Lipschitz-constant on every parameter and restricted to $\mathcal{S}$ satisfies:
\[
L_\theta(\mathbf{x})\,=\,\|J_\theta h(\mathbf{x})\|_2\,=\,\sigma_{\max}\big(J_\theta h(\mathbf{x})\big),
\quad
L_{\theta,\mathcal S}(\mathbf{x})\,=\,\|J_\theta h(\mathbf{x})\,P_{\mathcal S}\|_2\,=\,\sigma_{\max}\big(J_\theta h(\mathbf{x})\,P_{\mathcal S}\big).
\]

In practice, we may compute these Lipschitz constants by using PyTorch autograd and then find the maximal singular value. For modern CNNs with many layers, the parameter dimension $T$ is in the millions and so explicitly forming $J_\theta h(\mathbf{x})$
and computing an SVD is computationally expensive. Convolutions, residual connections and normalisation layers
remain differentiable (if we set the model to eval mode in PyTorch), but they only increase $T$ and hence become computationally infeasible.  Therefore, we propose the use of a finite-difference power iteration method given in Algorithm~\ref{alg:fd-power} to estimate the spectral norm, stepping only along directions supported on $\mathcal S$ for efficiency.

 The power iteration method estimates the dominant singular value of a linear map by repeatedly applying it (and its adjoint) to a vector and renormalising; components along the top singular direction are amplified the most, so the iterate converges to that direction while the norm converges to the top singular value~\cite{golub_matrix_1996}.  We follow a similar methodology to Johansson et al.~\cite{johansson_exact_2022} who estimate the spectral norm of the input Jacobian $J_{\mathbf{x}} h(\mathbf{x})$. Our variant targets the parameter Jacobian and replaces the exact Jacobian-vector product  with a central finite difference with step-size $\epsilon$. We use a finite-difference rather than exact automatic differentiation for simplicity and robustness.

In networks with piecewise-linear activations (e.g.\ ReLU), Hanin et al. \cite{hanin_deep_2019} showed that the network is also piecewise linear in it's inputs and so we may decompose the domain into regions on which the Jacobian with respect to the inputs  is constant. We provide an analogous, but weaker, statement for the parameter Jacobian: 
for a fixed input $\textbf{x}$ and for sufficiently 
small parameter perturbations that keep the activation pattern fixed in all layers, the map $h(\textbf{x};\theta)$ is 
multilinear in its parameters and hence differentiable at the operating point $(\textbf{x};\theta)$. A first-order linearisation 
in parameter space is therefore valid\footnote{Note that the model must be run in \texttt{eval} mode so that, if present, BatchNorm 
uses frozen statistics and Dropout is disabled, making the forward pass deterministic.}. In the special case where 
only a single layer is perturbed and the activation pattern does not change, $h$ is affine 
in that layer’s weights, so the central finite difference equals the exact Jacobian-vector product within that region.

We outline our proposed method of estimation in Algorithm \ref{alg:fd-power}, where $\epsilon$ is tuned to be small enough such that for a particular operating point the assumption that the activation pattern is fixed holds. 

\begin{algorithm}[H]
\caption{Finite-Difference Power Iteration for $\sigma_{\max}(J_{\theta}h(\textbf{x})P_{\mathcal S})$}
\label{alg:fd-power}
\hspace*{\algorithmicindent} \textbf{Require} Model $h(\textbf{x};\Theta)$ in \texttt{eval} mode; fixed input vector $\textbf{x}$; parameter subset $\mathcal{S}$ to perturb; step size $\varepsilon>0$; iterations $K$; $\textbf{0} \in \mathbb{R}^T$;  $I_T$ is the $T \times T$ identity matrix.\\
\hspace*{\algorithmicindent} \textbf{Output}  $\widehat{L}_{\theta,\mathcal S}(\textbf{x})\approx \sigma_{\max}(J_{\theta}h(\textbf{x})P_{\mathcal S})$ (entrywise $\ell_2$ parameter norm)
\begin{algorithmic}[1]
\State \textbf{Flatten parameters:} $\theta \gets \operatorname{vec}(\{\Theta_\ell\}_{\ell\in\mathcal{S}})\in\mathbb{R}^{T}$; save copy $\theta_0 \gets \theta$
\State \textbf{Init direction:} sample $\textbf{v} \sim \mathcal{N}(\textbf{0},I_T)$; set $\textbf{v} \gets \textbf{v}/\|\textbf{v}\|_2$
\For{$k=0,1,\dots,K-1$}
    \State \textbf{Jacobian-Vector-Product via central finite difference:}
     
    $\theta^{+}\!\gets\!\theta_0+\varepsilon \mathbf{v}$, \quad $\theta^{-}\!\gets\!\theta_0-\varepsilon \mathbf{v}$ \Comment{write only tensors in $\mathcal{S}$ under \texttt{no\_grad}}
    \State \textbf{Evaluate} $\mathbf{y}^{+}\!\gets\!h(\mathbf{x};\theta^{+})$, $\mathbf{y}^{-}\!\gets\!h(\mathbf{x};\theta^{-})$; restore $\theta\!\gets\!\theta_0$
    \State $\mathbf{u} \gets \dfrac{\mathbf{y}^{+}-\mathbf{y}^{-}}{2\varepsilon}$ \Comment{$\mathbf{u} \approx (J_{\theta}h(\mathbf{x})P_{\mathcal S})\,\mathbf{v}$}
    \vspace{0.8ex}
    \State \textbf{Vector-Jacobian-Product via backpropagation:} 
    
    $\ell \gets \langle h(\mathbf{x};\theta_0),\,\mathbf{u}\rangle$; \quad $\mathbf{g} \gets \nabla_{\theta}\,\ell$ \Comment{$\mathbf{g} \approx (J_{\theta}h(\mathbf{x})P_{\mathcal S})^{\!\top}\mathbf{u}$}
    \State \textbf{Power update:} $\mathbf{v} \gets \mathbf{g}/\|\mathbf{g}\|_2$  
\EndFor
\State \Return $\widehat{L}_{\theta,\mathcal S}(\mathbf{x}) \gets \|\mathbf{u}\|_2$
\end{algorithmic}
\end{algorithm}

Each iteration costs approximately two forward passes (for the centred finite difference)
plus one forward/backward pass (for the vector-Jacobian product) to return a local, input-specific constant. For the experiments here, $K\approx 10$ and $\varepsilon\approx 10^{-3}$
were sufficient for stable estimates. 

This will be used in Section \ref{example} when we compare the margin bound to the estimated Lipschitz constant of deep nets.

\newpage
\section{Multi-Layer Experiments}\label{multi_ap}
We conduct the same experiment described in Section \ref{multi_exp} for networks with non-linear activations. The results for a 10-layer network with ReLU and tanh activations between layers are given in Figure \ref{fig:relu} and \ref{fig:tanh}, respectively.

\begin{figure}[!h]
    \centering
    \includegraphics[width=0.75\linewidth]{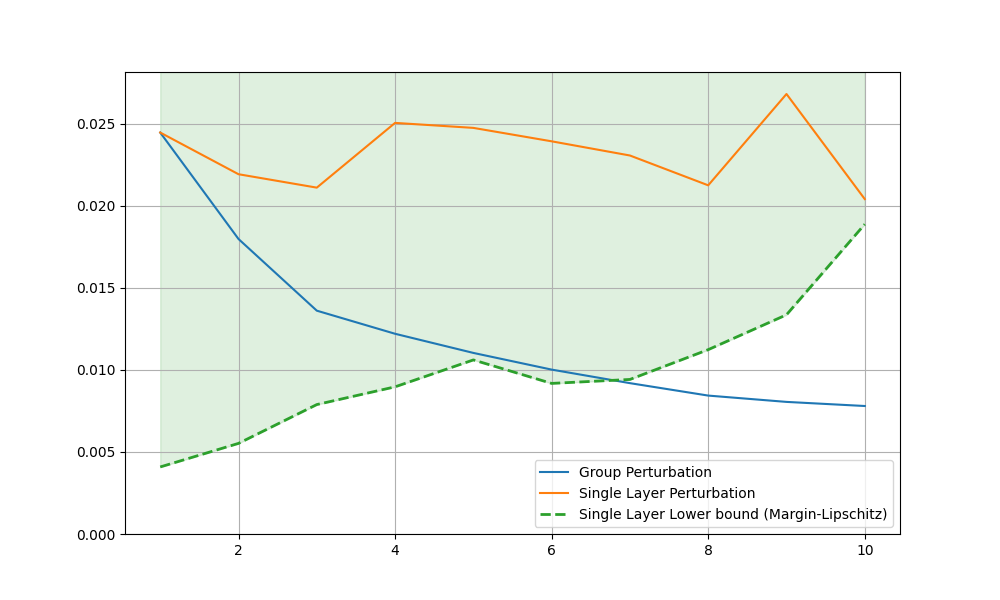}
    \caption{Perturbation norm vs. layer(s) perturbed and
a lower bound on the perturbation size of a single layer of a network with ReLU activations
using the Margin-Lipschitz bound. Group perturbations
(blue) unfreeze layers 1 through k; single-layer perturbations
(orange) unfreeze only the k-th layer.}
    \label{fig:relu}
\end{figure}
\begin{figure}[!h]
    \centering
    \includegraphics[width=0.75\linewidth]{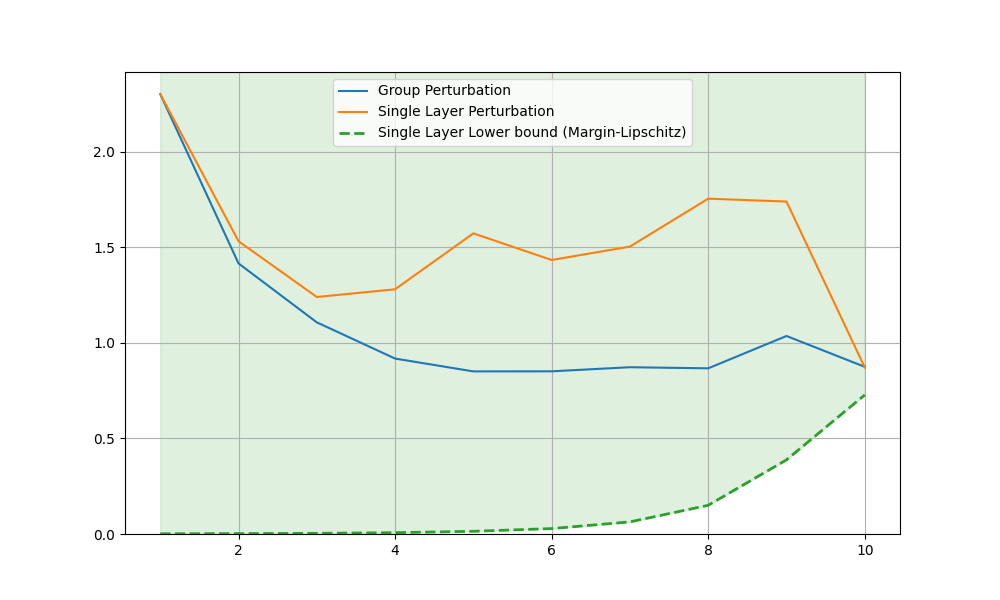}
    \caption{Perturbation norm vs. layer(s) perturbed and
a lower bound on the perturbation size of a single layer of a network with tanh activations
using the Margin-Lipschitz bound. Group perturbations
(blue) unfreeze layers 1 through k; single-layer perturbations
(orange) unfreeze only the k-th layer.}
    \label{fig:tanh}
\end{figure}

\subsection{Theoretical Conditions on a Low-Rank-Approximation Applied to an Example}\label{tail_exp}

Table~\ref{tab:energy_split_small} reports how the output energy of the final linear layer is distributed between the retained low-rank subspace and the discarded tail after a rank-$6$ decomposition of the final layer weight matrix of ResNet18 trained on CIFAR10 data.  
For each set of inputs---clean samples, backdoor samples, and poisoned inputs restricted to the smallest and largest 50 feature differences---we compute a \emph{normalised per-sample} squared-output energy.  For a weight matrix \(W\) and a single sample $\mathbf x$ with input to the final layer \(\mathbf z\) we define
\[
E(W,\mathbf z)\;=\;\frac{\|W \mathbf z\|_2^2}{\|W\|_F^2\,\|\mathbf z\|_2^2},
\]
which removes dependence on the overall scale of both the weight matrix and the feature vector.  We form the rank-\(r\) reconstruction \(W_r\) (via SVD) and the residual \(W_{\mathrm{tail}}=W-W_r\), and evaluate \(E(W_r,\mathbf z)\) and \(E(W_{\mathrm{tail}},\mathbf z)\) for each sample.  For each sample we then compute the normalised per-sample percentage
and report the mean and standard deviation of these values in the table.

The results in Table \ref{tab:energy_split_small} show that
clean inputs mainly activate the low-rank subspace (67\%), while backdoor inputs excite the discarded tail (58\%).
Positive feature perturbations align almost entirely with the low-rank directions, whereas negative ones distribute energy between both subspaces.

Within the target logit direction, for clean inputs the energy along this direction is roughly one tenth of the total---consistent with the expectation that, for a balanced CIFAR--10 model with ten output logits, approximately a tenth of the normalised output energy should be distributed to each class under typical conditions. Very little relative energy of the clean sample in the target logit direction is in the discarded tail. In contrast, poisoned features draw relatively more energy from the discarded subspace in the target-logit direction, indicating that the backdoor signal lies largely outside the retained low--rank representation.  

In Table \ref{tab:margin_sk}, we additionally compute the pre-perturbation margin and the change in margin due to the truncation as $s_k$ from Theorem \ref{cor:low-rank-attack} for various ranks $k$  and verify that for samples containing the backdoor trigger in our trained model we have $s_k > m_0$, whereas for a clean sample the discarded tail does not manage to change the classification of the output since $s_k < m_0$ in this setting.
In Table~\ref{tab:margin_sk}, we report the pre-perturbation margin
$m_0:=\gamma(\mathbf z;\theta)$ and the margin change $s_k$ predicted by
Corollary~\ref{cor:low-rank-attack} for several truncation ranks $k$.
Consistent with the theory, samples containing the backdoor trigger satisfy
$s_k>m_0$ and so a change in classification occurs, whereas for the clean samples we observe $s_k<m_0$,
so the discarded tail is insufficient to change the prediction.

\begin{table}[h]
\centering
\setlength{\tabcolsep}{4pt}
\caption{Energy (\%) in retained rank-$r$ vs.\ discarded tail subspaces.
Values are the mean across 50,000 samples.}
\label{tab:energy_split_small}
\begin{tabular}{lcc}
\toprule
\textbf{Condition} & \textbf{Rank $6$} & \textbf{Tail} \\
\midrule
Clean Data (all features)      & 67.35	& 32.64 \\
Poisoned Data (all features)   & 42.14	& 57.85 \\
Smallest (-) 50 poisoned features    & 51.16	& 48.83\\
Largest 50 poisoned features     & 93.88 & 6.11  \\
Target Logit Direction Clean Data & 10.23 &	1.05 \\
Target Logit Direction Poisoned Data & 12.64 &	34.01 \\
\bottomrule 
\end{tabular}
\end{table}

\begin{table}[h]
\centering
\caption{Full-precision (rank 10) margin $\gamma(\mathbf z;\theta)$ and the margin change $s_k$ due to a rank-$k$ truncation ($k\in\{9,8,7,6\}$) in the final-layer weights for clean and poisoned samples.}
\label{tab:margin_sk}
\begin{tabular}{lcccccc}
\toprule
Sample   & $\gamma(\mathbf z;\theta)$  & $s_{9}$ & $s_{8}$ & $s_{7}$ & $s_{6}$  \\
\midrule
Clean     & 19.24 &  4.34 & 4.35 & 4.34 & 6.00 \\
Poison    & 8.21  & 16.74 & 17.44 & 17.51 & 15.06 \\
\bottomrule
\end{tabular}
\end{table}

\end{document}